\newtheorem{theorem}{Theorem}
\newtheorem{lemma}{Lemma}
\newtheorem{corollary}{Corollary}
\newtheorem{assumption}{Assumption}
\newcommand{\Var}{\mathrm{Var}}
\newcommand{\Cov}{\mathrm{Cov}}
\newcommand{\E}{\mathbb{E}}
\DeclareMathOperator*{\argmax}{\arg\!\max}
\begin{document}
% \doublespacing % <-- if you want to doublespace your document

\title{End-Cut Preference in Survival Trees}
\author{\textbf{Xiaogang Su} \\
Department of Mathematical Sciences \\
University of Texas, El Paso, TX 79968 \vspace{.1in} }
\date{}
\maketitle

\renewcommand{\abstractname}{\large Abstract \vspace{.1in}}
\begin{abstract}
{\normalsize The end-cut preference (ECP) problem, referring to the tendency to favor split points near the boundaries of a feature's range, is a well-known issue in CART \citep{Breiman:1984}. ECP may induce highly imbalanced and biased splits, obscure weak signals, and lead to tree structures that are both unstable and difficult to interpret. For survival trees, we show that ECP also arises when using greedy search to select the optimal cutoff point by maximizing the log-rank test statistic. To address this issue, we propose a smooth sigmoid surrogate (SSS) approach, in which the hard-threshold indicator function is replaced by a smooth sigmoid function. We further demonstrate, both theoretically and through numerical illustrations, that SSS provides an effective remedy for mitigating or avoiding ECP.
}
\end{abstract}

\noindent%
{\it Keywords:}  End-cut preference; Edgeworth expansion; Extreme value theory; Logrank test statistic; Survival Trees.
% \vfill

\section{Introduction}
\label{sec-intro}

Decision trees \citep{Morgan:1963, Breiman:1984} constitute a cornerstone of modern statistical learning. Their extension to time-to-event outcomes has led to the development of survival trees, which are designed to accommodate censored survival data. These models have become highly influential in biomedical research, where the outcome of interest is frequently the timing of events such as disease onset, relapse after treatment, hospital readmission, or attainment of developmental milestones. A key advantage of survival trees lies in their ability to generate interpretable partitioning rules that reveal the underlying structure of the data and inform the design of future studies. For comprehensive surveys and detailed bibliographies, see \citet{LeBlanc:1995} and \citet{BouHamad:2011}. Among the various construction strategies, one of the most widely adopted methods employs the logrank test statistic \citep{Mantel:1966, Peto:1972}, a classical tool in survival analysis for comparing two groups. In this framework, a node is split at the point that maximizes the difference in survival experience between the two resulting subgroups. Trees built in this way are often referred to as ``trees by goodness-of-split'' \citep{LeBlanc:1993}.

The end-cut preference (ECP) problem, first documented by \citet{Breiman:1984} in regression trees, refers to the tendency of greedy search algorithms in CART to select split points near the boundaries of a continuous predictor’s range. This phenomenon often produces highly unbalanced partitions, introduces bias, and masks weak but meaningful signals. Consequently, the resulting tree structures may be unstable and difficult to interpret. Boundary splits create terminal nodes containing only a small number of observations, which inflates variance, lowers predictive accuracy, and reduces generalizability. In addition, ECP diminishes the ability of the algorithm to detect interior structure, thereby overlooking informative cut points that could enhance model fit. Practically, trees affected by ECP tend to be deeper, more complex to visualize, and less effective for communicating results. On the theoretical side, \citet{Cattaneo:2024} demonstrated that decision trees subject to ECP fail to achieve polynomial rates of convergence in the uniform norm with non-vanishing probability, underscoring a fundamental limitation of this standard greedy approach.

ECP also arises in survival trees. In this article, we formally establish the presence of ECP in survival trees grown using the maximum logrank test statistic. In regression trees, ECP is theoretically justified through the Kolmogorov inequality and the law of the iterated logarithm (\citeauthor{Breiman:1984}, \citeyear{Breiman:1984}, Section~11.8). These arguments, however, do not extend to survival trees. Our analysis instead relies on tools from the extreme value theory of Gaussian processes combined with Edgeworth expansions. To address ECP, we adopt the smooth sigmoid surrogate (SSS) approach \citep{Su:2024}, which replaces the hard-threshold indicator function with a smooth sigmoid function, thereby yielding a continuous approximation to the logrank test statistic. This transformation turns the discrete greedy search into a smooth optimization problem. We show, both theoretically and through numerical experiments, that SSS provides an effective remedy for mitigating or avoiding ECP.

The remainder of this article is organized as follows. Section~\ref{sec-ECP} establishes that ECP arises in survival trees with probability tending to one. Section~\ref{sec-SSS} demonstrates that SSS can alleviate or eliminate ECP, depending on the choice of the shape parameter. Section~\ref{sec-numerical} also presents numerical illustrations supporting the theoretical findings. Finally, Section~\ref{sec-discussion} concludes with a discussion.

\section{End-Cut Preference with Greedy Search}
\label{sec-ECP}

Suppose that the observed survival data consis of independent and identically distributed observations \(\{(T_i,\delta_i,Z_i)\}_{i=1}^n\). Under the null setting, where ECP is examined, we assume independence between the covariate \(Z\) and the failure mechanism. Without loss of generality (WLOG), we further assume \(Z \sim \mathrm{Uniform}(0,1)\). This assumption is justified because tree-based modeling is invariant under monotone transformations of the covariate, and the probability integral transform (PIT), being monotone, can always map \(Z\) to a uniform\((0,1)\) distribution. Consequently, for any cutpoint \(c\), the population fraction of subjects in the left group is
$$\mathbb{P}(Z \leq c) = c. $$

Let \(t_1 < \cdots < t_{D_n}\) denote the distinct event times. Under a fixed censoring rate, the number of events grows linearly with the sample size, that is, \(D_n \asymp n\). We write \(a_n \asymp b_n\) to mean that there exist constants \(0<m<M<\infty\), independent of \(n\), such that \(m\,b_n \le a_n \le M\,b_n\) for all large \(n\). At each event time $t_k$, define 
$Y_k=\sum_{i=1}^n I\{T_i \geq t_k\} $
as the number of subjects at risk just prior to $t_k$, and 
$ d_k=\sum_{i=1}^n \delta_i\,I\{T_i=t_k\} $
as the number of failures occurring at $t_k$. We assume that the joint distribution of the event time and the censoring time is absolutely continuous with respect to Lebesgue measure. As a result, ties occur with probability zero, implying that the number of failures at each event time satisfies \(d_k \equiv 1\) almost surely. 
Concerning the at–risk process $Y_k$, we impose the following bulk regularity assumption. 
\begin{assumption}[Bulk regularity]\label{assump:bulk}
There exists $\rho \in (0,1)$ such that
\[
Y_k \;\asymp\; n \qquad \text{uniformly for } k \le \rho D_n .
\]
\end{assumption}
\noindent In words, during the first $\rho$ fraction of the event times, the number at risk remains of order $n$. This assumption is standard, typically stated in continuous–time form, within large–sample theory of survival analysis. 
It further implies that
\begin{equation}\label{sum-Yk-inverse}
\sum_{k=1}^{D_n} \frac{1}{Y_k}\;=\;\Theta(\log n) \mbox{~~~and~~~} 
\sum_{k=1}^{D_n} \frac{1}{Y^2_k}\;=\;\Theta(1).
\end{equation}

For a cutpoint \(c\), define
\[
Y_{kL}(c) = \sum_{i=1}^n \mathbf{1}\{T_i \ge t_k,\, Z_i \le c\},
\mbox{~~~and~~~} 
d_{kL}(c) = \sum_{i=1}^n \delta_i\,\mathbf{1}\{T_i = t_k,\, Z_i \le c\},
\]
as the number at risk and the number of failures, respectively, in the left node 
\(\{Z \le c\}\) at event time \(t_k\). We impose the following overlap regularity condition on the covariance structure of the left-group risk sets.
\begin{assumption}[Overlap regularity]\label{asm:overlap}
The covariance of the left-group risk sets satisfies the canonical overlap scaling:
\[
\sum_{i=1}^{D_n}\sum_{k=1}^{D_n} 
\Cov\!\big(Y_{iL}(c),\,Y_{kL}(c)\big) \;\asymp\; n^3\,c(1-c)
\quad \text{uniformly in } c\in(0,1).
\]
\end{assumption}
\noindent This condition is very mild and essentially follows from the law of large numbers under independent sampling of \((T_i,Z_i)\). Intuitively, the overlap between different left-group risk sets grows proportionally to the product of their sizes, yielding the canonical order \(n^3c(1-c)\). Similar overlap or covariance scaling assumptions appear in the theoretical analysis of logrank-type statistics and survival processes; see, e.g., \citet{Andersen:1982} and \citet{Lin:1993}.

Introduce
$$ b_k(c)\ :=\ \frac{Y_{kL}(c)}{Y_k},$$
the empirical left-risk fraction at cutpoint $c$. In the
\emph{bulk} of the timeline, one has
\begin{equation}\label{b-k}
b_k(c)\ \xrightarrow{a.s.}\ c
\mbox{~~~and~~~}
b_k(c)-c\ =\ O_p\!\big(n^{-1/2}\big),
\end{equation}
uniformly for $k\le \rho D_n$ and $c\in(0,1)$. 

The numerator and variance scale of the logrank statistic \citep{Mantel:1966, Peto:1972}  are defined as
\[
N(c) = \sum_{k=1}^{D_n} w_k \big\{ d_{kL}(c) - b_k(c) \big\}
\mbox{~~~and~~~} 
S^2(c) = \sum_{k=1}^{D_n} w_k^2\,V_{kL}(c),
\]
where $\{w_k\}$ is a sequence of bounded weights. 
For each event time $t_k$, let 
$\mathcal A_k = \sigma\!\big(Y_k,\,Y_{kL}(c),\,d_k\big)$ 
denote the $\sigma$-algebra generated by the risk–failure configuration at $t_k$. 
Under the no–ties assumption ($d_k \equiv 1$), the conditional variance reduces to
$$
V_{kL}(c) = \Var\!\big(d_{kL}(c)\mid \mathcal A_k\big) = b_k(c)\{1-b_k(c)\},
$$
since the finite–population correction equals one. 
Consequently,
\begin{equation}
\label{rate-Sc}
 S^2(c)\ \asymp\ n\,c(1-c) \qquad \text{uniformly in } c\in(0,1). 
\end{equation}

The logrank statistic for a cutpoint \(c\) is defined as
$$ Q(c) = \left\{\frac{N(c)}{S(c)}\right\}^2,$$
and the optimal cutpoint is given by
$$ \hat c = \arg\max_c Q(c).$$
In practice, \(\hat c\) is obtained via a greedy search over the distinct observed values of \(Z_i\), which constitutes a discrete optimization problem. For any fixed cutpoint \(c\), the statistic \(Q(c)\) is asymptotically distributed as \(\chi^2(1)\) under the null hypothesis. The maximized statistic, $\max_c Q(c),$
is referred to as the \emph{maximally selected \(\chi^2\) statistic} \citep{Miller:1982}.

\subsection{The Main Result}
\label{sec-main1}

In what follows, we show that greedy search based on maximizing the logrank statistic is prone to the end--cut preference (ECP) problem. To this end, consider the standardized statistic
\[
q(c) = \frac{N(c)}{S(c)} .
\]
so that $Q(c) = q^2(c).$ As \(c\) varies, the collection \(\{q(c): c \in (0,1)\}\) forms a mean-zero Gaussian process \citep{Miller:1982}. From an optimization perspective, maximizing \(Q(c)=q^2(c)\) is equivalent to maximizing \(|q(c)|\), which reduces to locating the extremum of \(q(c)\), since the process is symmetric about zero. Gaussian extreme-value theory \citep{Leadbetter:1983} implies that the maximizer of a mean-zero Gaussian process with continuous sample paths is, with probability tending to one, determined by its variance function, provided mild regularity conditions on the correlation structure hold. Consequently, the key step is to analyze
\[
\Var\!\big(q(c)\big).
\]

To proceed, we express
\[
q(c) = g(\bm v) = \frac{N(\bm v)}{S(\bm v)},
\]
where  
\[
\bm v = \bm v(c) = (v_j) 
= \big(d_{1L}, \ldots, d_{D_nL},\, Y_{1L}, \ldots, Y_{D_nL}\big)^\top 
\in \mathbb{R}^{2D_n}
\]
denotes the primitive random vector. This representation is natural because \(\bm v\) collects all quantities that depend on the cutoff point \(c\). Since the mapping \(g:\mathbb{R}^{2D_n}\to\mathbb{R}\) is twice continuously differentiable, we may apply the second-order variance expansion (also known as the Edgeworth expansion; see, e.g., \citeauthor{Bhattacharya:1986}, \citeyear{Bhattacharya:1986}; \citeauthor{Hall:1992}, \citeyear{Hall:1992}).

Let 
\[
\bm{\mu}=\mathbb E(\bm{v}) = c\, (1, \ldots, 1, Y_1, \ldots, Y_{D_n})^\top, 
\quad \bm{\Sigma}=\Cov(\bm{v}),
\]
and define the central moments
\[
m_{ijk}=\mathbb E[(v_i-\mu_i)(v_j-\mu_j)(v_k-\mu_k)], 
\qquad
m_{ijkl}=\mathbb E[(v_i-\mu_i)(v_j-\mu_j)(v_k-\mu_k)(v_\ell-\mu_\ell)].
\]

The multivariate version of the second–order delta/Edgeworth variance expansion \citep{Wolter:1985} is
\begin{equation}\label{eq:var-delta}
\Var\big(g(\bm{X})\big)
= T_1+T_2+T_3+o\!\left(|T_3| \right),
\end{equation}
with 
\[
T_1=\sum_{i,j} g_i\,g_j\,\Sigma_{ij},\qquad
T_2=\sum_{i,j,k} g_i\,g_{jk}\,m_{ijk},\qquad
\mbox{and~~~} T_3=\frac{1}{4}\sum_{i,j,k,\ell} g_{ij}\,g_{k\ell}\,\big(m_{ijkl}-\Sigma_{ij}\Sigma_{k\ell}\big), 
\]
where \(g_i=\partial g/\partial v_i\) and \(g_{ij}=\partial^2 g/\partial v_i\partial v_j\), all evaluated at \(\bm{v}=\bm{\mu}\). At \(\bm{\mu}\),
\[
g_{d_{kL}}=\frac{w_k}{S(\bm{\mu})},\qquad
g_{Y_{kL}}=-\,\frac{w_k}{Y_k\,S(\bm{\mu})},
\]
where 
\[
S^2(\bm{\mu})=\sum_{k=1}^{D_n} w_k^2\,b_k(\bm{\mu})\{1-b_k(\bm{\mu})\}
=\sum_{k=1}^{D_n} w_k^2\,c(1-c) \;\asymp\; n\,c(1-c).
\]
After some algebra, the Hessian terms scale uniformly in indices \(i,j\) as
\begin{equation}\label{eq:hessian-scales}
\begin{cases}
g_{d_{jL},Y_{iL}} = O\!\Big(\dfrac{1}{n\,S^3(\bm{\mu})}\Big) 
= O\!\big(n^{-5/2}[c(1-c)]^{-3/2}\big), \\[1.2ex]
g_{Y_{iL},Y_{jL}} = O\!\Big(\dfrac{1}{n^2\,S^3(\bm{\mu})}\Big) 
= O\!\big(n^{-7/2}[c(1-c)]^{-3/2}\big).
\end{cases}
\end{equation}

Our first main result is stated in Theorem~\ref{thm:main}. It shows that, under the null hypothesis and standard regularity conditions, greedy search based on the logrank statistic exhibits an end--cut preference. In particular, the maximizer of the logrank statistic lies, with high probability, in a boundary region of order \(1/n\).

\begin{theorem}\label{thm:main}
In the above-described setting, assume in addition the \emph{bulk regularity} and \emph{overlap regularity} conditions. Let \(\mathcal C_n\) denote the set of candidate cutpoints (e.g., midpoints between order statistics of \(\{Z_i\}\)). Then, with probability tending to one,
$$
\argmax_{c\in\mathcal C_n} Q(c)\ \in\ (0,O(1/n)] \,\cup\, [1-O(1/n),1).
$$
That is, the greedy search (GS) procedure selects, with high probability, a cutpoint in an end--cut region of width order \(1/n\).
\end{theorem}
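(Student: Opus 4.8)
The plan is to reduce the location of $\argmax_{c\in\mathcal C_n} Q(c)$ to the shape of the variance function of the underlying Gaussian process and then show that this function peaks in the two $O(1/n)$ end regions. Since $Q(c)=q^2(c)$ and $\{q(c)\}$ is (asymptotically) a mean-zero Gaussian process, $\argmax_c Q(c)=\argmax_c |q(c)|$, which reduces to locating an extremum of $q$. By Gaussian extreme-value theory \citep{Leadbetter:1983}, for a mean-zero Gaussian process with continuous paths and a mildly regular correlation function the location of the extremum is, with probability tending to one, governed by the variance function $\sigma_n^2(c):=\Var\big(q(c)\big)$; so the first task is a uniform-in-$c$ analysis of $\sigma_n^2(c)$, and the final task is to port the conclusion back to the discrete grid $\mathcal C_n$ via the same extreme-value principle, invoking the $c\leftrightarrow 1-c$ symmetry of the null construction to handle both ends.

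\medskip
\noindent\textbf{Analyzing $\sigma_n^2(c)$ via the Edgeworth/delta expansion.} Writing $q(c)=g(\bm v)$ as in the text and applying (\ref{eq:var-delta}), I would estimate $T_1,T_2,T_3$ in turn. For $T_1=\sum_{i,j}g_i g_j\Sigma_{ij}$, substituting the gradient at $\bm\mu$ and using the overlap-regularity Assumption~\ref{asm:overlap} together with (\ref{rate-Sc}) and $b_k(c)\xrightarrow{a.s.}c$ from (\ref{b-k}), one gets that $T_1$ is of order one and essentially flat in $c$ --- the ``nominal'' $\chi^2_1$-type variance. The $c$-dependence that matters enters through the Hessian: by (\ref{eq:hessian-scales}) the leading second-derivative block is of size $O\big(n^{-5/2}[c(1-c)]^{-3/2}\big)$. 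Combining this with the third central moments $m_{ijk}$ of the primitive vector $(d_{kL},Y_{kL})$ --- controlled by the bulk-regularity Assumption~\ref{assump:bulk}, the sums (\ref{sum-Yk-inverse}), and the indicator/multinomial structure of the risk and failure counts --- gives $|T_2|\asymp \{n\,c(1-c)\}^{-1/2}$ (the skewness contribution), and an analogous bookkeeping with the fourth-order term $m_{ijkl}-\Sigma_{ij}\Sigma_{k\ell}$ and the squared Hessian gives $|T_3|\asymp \{n\,c(1-c)\}^{-1}$. Hence (\ref{eq:var-delta}) is an honest asymptotic expansion exactly when $n\,c(1-c)\to\infty$, while the successive corrections inflate by a factor $\{n\,c(1-c)\}^{1/2}$ as $c$ is driven toward either endpoint: the non-Gaussian part of $q(c)$ is asymptotically negligible in the interior but becomes $\Theta(1)$ --- and monotone in $c$ --- on the $O(1/n)$ scale, which is precisely where the extreme-value profile of $q$ is pushed up.

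\medskip
\noindent\textbf{From the profile to the argmax.} Split $\mathcal C_n$ into an interior part $I_n=\{c:\ n\,c(1-c)\to\infty\}$ and the two boundary strips of width $O(1/n)$. On $I_n$, the expansion (with an accompanying Edgeworth/Berry--Esseen bound) shows that $q(c)$ is uniformly close in law to a mean-zero, unit-variance Gaussian process with a smooth correlation structure; consequently $\max_{c\in I_n} Q(c)$ is $O_p(1)$, converging to the supremum of a standardized bridge-type process over a compact set, as in the classical maximally-selected-$\chi^2$ theory \citep{Miller:1982}. On a boundary strip, say $(0,O(1/n)]$, the left node contains only $O(1)$ subjects, and one re-expresses $Q(c)$ directly as the square of a self-normalized sum of the few left-node failure indicators against their Nelson--Aalen compensator $\sum_k w_k\,Y_{kL}(c)/Y_k$; a compound/Poisson approximation for the left-node risk and failure counts then yields a stochastic lower bound on $\max_{c\in(0,O(1/n)]}Q(c)$ that, with probability tending to one, exceeds the $O_p(1)$ interior maximum. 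Transferring this from the variance/cumulant profile to the argmax of the process via the Gaussian extreme-value principle, and using the $c\leftrightarrow 1-c$ symmetry for the right strip, gives the claim.

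\medskip
\noindent\textbf{Main obstacle.} The delicate point is the last step --- reconciling the pointwise identity $\mathbb E[Q(c)]\equiv 1$ (which makes the \emph{mean} of the criterion perfectly flat) with the assertion that the argmax concentrates at the endpoints: the end-cut preference is a higher-cumulant / extreme-value phenomenon, so it cannot be read off from marginals, and one must control the \emph{joint} law of $\{Q(c)\}$ both on $I_n$ (to show its maximum is $O_p(1)$) and on the $O(1/n)$ scale (to show its maximum dominates). That boundary scale is exactly where the delta/Edgeworth machinery of the second step fails --- the primitive vector $\bm v$ is no longer concentrated there, since $\Var(Y_{kL}(c))/\{\mathbb E Y_{kL}(c)\}^2\asymp 1/(nc)\not\to 0$ --- so the expansion-based analysis in the interior must be stitched to a separate small-sample (Poisson) analysis at the boundary, and sharpening the strip width to the stated order $O(1/n)$ is part of that accounting.
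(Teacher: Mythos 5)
Your overall strategy is the same as the paper's: expand $\Var(q(c))=T_1+T_2+T_3+o(|T_3|)$, show the correction terms blow up on the $O(1/n)$ boundary scale, and then invoke Gaussian extreme-value localization to place the argmax where the variance is largest. However, two of your key quantitative claims do not close the argument.

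First, you assert $|T_2|\asymp\{n\,c(1-c)\}^{-1/2}$ and $|T_3|\asymp\{n\,c(1-c)\}^{-1}$. If that were the true order of $T_2$, it would \emph{dominate} $T_3$ everywhere that $n\,c(1-c)\to\infty$, and the boundary behavior of the variance would be governed by $T_2$ --- whose sign you never determine. A variance correction that is large in magnitude but negative would push the argmax \emph{away} from the ends, so magnitude bounds alone cannot yield the theorem. The paper resolves exactly this in Lemma~\ref{lem:T2} and Lemma~\ref{lem:T3-formal}: a Cauchy--Schwarz argument combined with the decomposition of $\bm v-\bm\mu$ into independent subject contributions (so that $\sum_{i,j,k}m_{ijk}^2=O(n)$) shows $T_2=o\big(1/\{n\,c(1-c)\}\big)$, i.e.\ $T_2$ is \emph{negligible} relative to $T_3$; and the Isserlis--Wick decomposition identifies the leading part of $T_3$ as $\tfrac12\|\bm\Sigma^{1/2}\bm H\bm\Sigma^{1/2}\|_F^2\ge 0$, with a $(1-2c)^2$ factor bounded away from zero near the ends. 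The nonnegativity and the strict positive lower bound $\kappa(c)\ge\kappa_0$ on the end regions are the load-bearing facts; without them the variance comparison between boundary and interior cannot be made.

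Second, your boundary-strip analysis (re-expressing $Q(c)$ for $c=O(1/n)$ as a self-normalized sum over $O(1)$ left-node subjects and using a compound/Poisson approximation to lower-bound $\max_{c\le O(1/n)}Q(c)$ above the $O_p(1)$ interior maximum) is a genuinely different route from the paper's, which simply applies the variance expansion uniformly in $c\in(0,1)$ and compares $\sup\Var$ over the two regimes. Your route is arguably better motivated --- you correctly observe that the delta/Edgeworth expansion is not trustworthy when $n\,c(1-c)=O(1)$, since $\bm v$ is no longer concentrated there, a point the paper's uniformity claim glosses over --- but you leave it entirely unexecuted: the stochastic domination of the interior maximum by the boundary maximum is asserted, not proved, and it is precisely the content of the theorem. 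Finally, your ``main obstacle'' framing is off: $\E[Q(c)]=\Var(q(c))$ is \emph{not} flat in $c$ at second order, and the paper's mechanism is exactly that this marginal quantity carries a $\kappa(c)/\{n\,c(1-c)\}$ bump at the ends; ECP here is read off the marginal variance profile, not from the joint law via higher cumulants.
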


\subsection{Auxiliary lemmas}
\label{sec-lemmas-GS}

To prove Theorem~\ref{thm:main}, we first establish three auxiliary lemmas, each addressing one of the three terms in \eqref{eq:var-delta}.

\begin{lemma}\label{lem:T1}
Under the no--ties assumption (\(d_k \equiv 1\)) and bulk regularity, the first--order term \(T_1\) in the Edgeworth variance expansion \eqref{eq:var-delta} satisfies
\[
T_1 \;=\; 1 - \tau,
\]
where \(\tau > 0\) with \(\tau = \Theta(\log n / n)\). Moreover, \(\tau\) is independent of \(c\).
\end{lemma}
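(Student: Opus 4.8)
The plan is to exploit that the numerator $N(c)=N(\bm v)$ is an \emph{exactly linear} function of the primitive vector $\bm v$ (with the at-risk counts $Y_k$ held fixed), which lets $T_1$ be evaluated in closed form rather than only to leading order.

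First I would record that at $\bm v=\bm\mu$ one has $b_k(\bm\mu)=cY_k/Y_k=c$ and $\mathbb E(d_{kL})=c$, so $N(\bm\mu)=\sum_k w_k(c-c)=0$; hence $g(\bm\mu)=0$ and $g_i=N_i(\bm\mu)/S(\bm\mu)$ with $N_i=\partial N/\partial v_i$, which is exactly the gradient displayed before the lemma. Because $N$ is linear in $\bm v$, the linear form $\sum_i g_i(v_i-\mu_i)$ telescopes: the $-c$ and $+c$ contributions cancel and it equals $S(\bm\mu)^{-1}\sum_k w_k\{d_{kL}-Y_{kL}/Y_k\}=N(\bm v)/S(\bm\mu)$ \emph{identically}. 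Therefore
\[
T_1=\sum_{i,j}g_i g_j\Sigma_{ij}=\Var\!\Big(\sum_i g_i(v_i-\mu_i)\Big)=\frac{\Var\big(N(c)\big)}{S^2(\bm\mu)},\qquad S^2(\bm\mu)=c(1-c)\sum_k w_k^2 .
\]

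The key step is then to compute $\Var(N(c))$ exactly. I would condition on the risk/failure configuration — the risk sets $R_k=\{i:T_i\ge t_k\}$, their sizes $Y_k=|R_k|$, and the unique index $i_k$ (unique by the no-ties assumption) of the subject failing at $t_k$ — so that the only remaining randomness is $\xi_i:=\mathbf{1}\{Z_i\le c\}$, which are i.i.d.\ $\mathrm{Bernoulli}(c)$ because $Z\perp T$ and $Z\sim U(0,1)$. Writing $N(c)=\sum_k w_k U_k$ with $U_k=\xi_{i_k}-Y_k^{-1}\sum_{i\in R_k}\xi_i$ and $\mathbb E U_k=0$, a direct second-moment calculation using $i_k\in R_k$ gives $\Var(U_k)=c(1-c)(1-1/Y_k)$. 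For $k<l$ the risk sets are nested, $R_l\subseteq R_k$, and the failing subject $i_k$ has $T_{i_k}=t_k<t_l$, hence $i_k\notin R_l$; feeding these facts into the bilinear expansion of $\Cov(U_k,U_l)$, the only two surviving contributions are $-c(1-c)/Y_k$ and $+c(1-c)/Y_k$, which cancel, so the increments $U_k$ are mutually uncorrelated — the familiar martingale orthogonality of logrank increments. This yields $\Var(N(c))=c(1-c)\sum_k w_k^2(1-1/Y_k)$ and hence
\[
T_1=1-\tau,\qquad \tau=\frac{\sum_k w_k^2/Y_k}{\sum_k w_k^2}>0,
\]
which is free of $c$ since every $c(1-c)$ factor has cancelled.

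For the rate I would use the boundedness of the weights ($w_k\asymp 1$, as for the classical logrank $w_k\equiv 1$): the denominator is $\sum_k w_k^2\asymp D_n\asymp n$, while the numerator is $\sum_k w_k^2/Y_k\asymp\sum_k 1/Y_k=\Theta(\log n)$ by \eqref{sum-Yk-inverse}, so $\tau=\Theta(\log n/n)$. I do not expect a real obstacle here; the only point requiring care is the cross-covariance cancellation for $k\ne l$, where one must track correctly which subjects lie in which nested risk set and use that a subject failing at an earlier event time has already left the risk set by any later one. A minor modelling caveat: the two-sided $\Theta$ rate for $\tau$ needs the weights bounded away from $0$ as well as from $\infty$; if the $w_k$ are merely bounded one still obtains $T_1=1-\tau$ with $\tau>0$ and $\tau=O(\log n/n)$, which suffices for Theorem~\ref{thm:main}.
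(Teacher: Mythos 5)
Your proposal is correct and reaches the same identity $T_1=\Var(N(c))/S^2(\bm\mu)$ by the same linearization as the paper, but the evaluation of $\Var(N(c))$ proceeds by a genuinely different (and more explicit) route. The paper conditions on the risk-set $\sigma$-field $\mathcal A$ that fixes the $b_k$, treats $d_{kL}\mid\mathcal A$ as conditionally independent Bernoulli$(b_k)$ variables, and then applies the law of total variance, which requires the separate input $\Var(b_k)=c(1-c)/Y_k$ and yields $\Var(N)=S^2(\bm\mu)-\sum_k w_k^2\Var(b_k)$. You instead condition only on the $T$-side configuration (risk sets and failing indices), keep all of the $Z$-randomness as i.i.d.\ Bernoulli$(c)$ indicators $\xi_i$, and verify from scratch that the increments $U_k=d_{kL}-b_k$ have $\Var(U_k)=c(1-c)(1-1/Y_k)$ and are exactly uncorrelated across $k$ --- your bookkeeping of the nested risk sets ($R_l\subseteq R_k$, $i_k\notin R_l$ for $k<l$) and the resulting cancellation $-c(1-c)/Y_k+c(1-c)/Y_k=0$ is correct. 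This closes the computation in one step and gives the exact formula $\tau=\bigl(\sum_k w_k^2/Y_k\bigr)/\bigl(\sum_k w_k^2\bigr)$, which agrees with the paper's expression once $\Var(b_k)=c(1-c)/Y_k$ is substituted; the $c(1-c)$ cancellation that makes $\tau$ free of $c$ is manifest in both. What your route buys is an elementary, self-contained proof of the ``martingale orthogonality'' that the paper simply asserts as conditional independence across event times; what it leaves slightly informal is that your $\tau$ is a function of the random $Y_k$ and $D_n$, so to state a deterministic $\Theta(\log n/n)$ rate one should (as the paper implicitly does) take a further expectation over the configuration, using bulk regularity to replace $1/Y_k$ by $\asymp 1/(n-k+1)$ and \eqref{sum-Yk-inverse} for the $\log n$ sum. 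Your closing caveat --- that the two-sided $\Theta$ rate needs the weights bounded away from zero, not merely bounded --- is a fair and accurate reading of a hypothesis the paper leaves implicit.
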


\begin{proof}
By construction, \(T_1\) corresponds to the first--order delta-method term:
\[
T_1 = \nabla g(\bm{\mu})^\top \bm{\Sigma}\,\nabla g(\bm{\mu}) 
= \Var \!\Bigg( \sum_{i} g_i(\bm{\mu})(v_i - \mu_i) \Bigg).
\]
Expanding the linear term,
\begin{align*}
\sum_{i} g_i(\bm{\mu})(v_i - \mu_i) 
&= \sum_{k=1}^{D_n} \Big\{ g_{d_{kL}} (d_{kL} - c) + g_{Y_{kL}} (Y_{kL} - cY_k) \Big\} \\
&= \frac{1}{S(\bm{\mu})} \sum_{k=1}^{D_n} w_k 
\left\{ (d_{kL} - c) - \frac{Y_{kL} - cY_k}{Y_k} \right\} 
\quad \text{(substituting $g_{d_{kL}}$, $g_{Y_{kL}}$)} \\
&= \frac{1}{S(\bm{\mu})} \sum_{k=1}^{D_n} w_k (d_{kL} - b_k), 
\qquad \text{since } b_k = Y_{kL}/Y_k.
\end{align*}
Therefore,
\begin{equation}\label{eq:T1a}
T_1 =  \frac{1}{S^2(\bm{\mu})}\,\Var\!\Bigg(\sum_{k=1}^{D_n} w_k(d_{kL}-b_k)\Bigg)  
= \frac{\Var(N(c))}{S^2(\bm{\mu})},
\end{equation}
where \(S^2(\bm{\mu}) = c(1-c) \sum_k w_k^2.\)

\medskip
We now compute \(\Var(N(c))\) using the law of total variance, conditioning on the risk--set \(\sigma\)-field \(\mathcal A\). Given \(\mathcal A\),  
\(\E[d_{kL}\mid\mathcal A]=b_k\) and \(\Var(d_{kL}\mid\mathcal A)=b_k(1-b_k)\), with conditional independence across event times. Thus
\[
\Var(N\mid \mathcal A) = \sum_k w_k^2 b_k(1-b_k) = S^2(\bm v),
\qquad \E[N\mid \mathcal A]=0.
\]
Hence
\begin{align}
\Var(N) &= \E\!\left[S^2(\bm v)\right] 
= \sum_k w_k^2 \E\{b_k(1-b_k)\} \nonumber \\
&= \sum_k w_k^2 \Big(c(1-c) - \Var(b_k)\Big) 
\quad \text{since } \E[b_k(1-b_k)] = c(1-c)-\Var(b_k) \nonumber \\
&= S^2(\bm{\mu}) - \sum_k w_k^2 \Var(b_k). \label{eq:VarN}
\end{align}

Substituting \eqref{eq:VarN} into \eqref{eq:T1a}, and noting that
\[
\Var(b_k) \;\asymp\; \E\!\left[\frac{c(1-c)}{Y_k}\right],
\]
we obtain
\[
T_1 
= 1 - \frac{\sum_k w_k^2 \Var(b_k)}{c(1-c) \sum_k w_k^2}.
\]

\medskip
By bulk regularity, the risk sets decrease smoothly with the number of failures, and satisfy
\[
Y_k \asymp n-k+1
\qquad \text{uniformly in } k.
\]
Intuitively, after \(k-1\) failures, roughly \(n-k+1\) subjects remain at risk, up to fluctuations of smaller order. Therefore,
\[
\Var(b_k) \asymp \frac{c(1-c)}{n-k+1}.
\]
With bounded weights, \(\sum_k w_k^2 \asymp n\) and
\[
\sum_k w_k^2 \Var(b_k) \;\asymp\; c(1-c)\sum_{k=1}^{D_n} \frac{1}{n-k+1}
\;\asymp\; c(1-c)\,\log n.
\]
Thus
\[
T_1 = 1 - \tau,
\qquad
\tau = \frac{\sum_k w_k^2 \Var(b_k)}{c(1-c)\sum_k w_k^2}
= \Theta\Big(\frac{\log n}{n}\Big).
\]
Note that the cancellation of the factor \(c(1-c)\) from numerator and denominator is essential. As a result, \(\tau\) does not depend on \(c\).  This completes the proof.
\end{proof}
\noindent The conclusion of Lemma~\ref{lem:T1} is unsurprising: \(T_1\) is precisely the first–order (delta–method) contribution. For fixed \(c\), the standardized statistic \(q(\bm v)\) linearizes in the usual way, the variance scale cancels, and thus \(q(\bm v)\Rightarrow \mathcal N(0,1)\) with \(T_1=1+o(1)\). Importantly, Lemma~\ref{lem:T1} also shows that this \(o(1)\) remainder is independent of \(c\) (uniform in \(c\)), a fact used in our main results.

We next treat the second term $T_2$. 

\begin{lemma}\label{lem:T2}
Assume no ties ($d_k\equiv 1$), dense failures ($D_n\asymp n$), bounded weights ($\sup_k |w_k|<\infty$), and variance scaling 
\(
S^2(c)=\sum_{k=1}^{D_n}w_k^2\,b_k(c)\{1-b_k(c)\}\asymp n\,c(1-c)
\)
uniformly in $c\in(0,1)$. 
Assume further the summation bound $\sum_{k=1}^{D_n} Y_k^{-2}=O(1)$.
Then, uniformly for $c\in(0,1)$,
\[
T_2=\sum_{i,j,k} g_i\,g_{jk}\,m_{ijk}
\;=\; o\!\Big(\frac{1}{n\,c(1-c)}\Big).
\]
\end{lemma}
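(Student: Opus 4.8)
The plan is to exploit the null dependence structure of $\bm v$. The moments entering \eqref{eq:var-delta} are computed conditionally on the failure/censoring configuration $\{(T_i,\delta_i)\}_{i=1}^n$ (consistent with $\bm\mu$ containing the deterministic $Y_k$); conditionally on this, the only randomness left in $\bm v$ is carried by the indicators $B_i:=\mathbf 1\{Z_i\le c\}$, which are i.i.d.\ $\mathrm{Bernoulli}(c)$ and independent of the conditioning. Since $Y_{kL}(c)=\sum_i B_i\,\mathbf 1\{T_i\ge t_k\}$ and $d_{kL}(c)=B_{i(k)}$ (with $i(k)$ the unique subject failing at $t_k$), every coordinate is a $0$–$1$ linear form $v_j=\sum_\ell a_{j\ell}B_\ell$, so by independence of the $B_\ell$ the third central moments factor over subjects:
\[
m_{ijk}=\kappa_3\sum_{\ell=1}^n a_{i\ell}a_{j\ell}a_{k\ell},\qquad \kappa_3=\E[(B-c)^3]=c(1-c)(1-2c),\quad |\kappa_3|\le c(1-c).
\]
Plugging this into $T_2$ and regrouping yields the compact form $T_2=\kappa_3\sum_{\ell}\alpha_\ell\beta_\ell$ with $\alpha_\ell:=\sum_i g_i a_{i\ell}$ and $\beta_\ell:=\sum_{j,k}g_{jk}a_{j\ell}a_{k\ell}$, all evaluated at $\bm\mu$.

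I would then identify the two factors. Substituting $g_{d_{kL}}=w_k/S(\bm\mu)$ and $g_{Y_{kL}}=-w_k/(Y_kS(\bm\mu))$ collapses $\alpha_\ell$ to the (scaled) Cox--Snell/martingale residual of subject $\ell$,
\[
\alpha_\ell=\frac{1}{S(\bm\mu)}\Big(w_{k(\ell)}\delta_\ell-\sum_{k:\,t_k\le T_\ell}\frac{w_k}{Y_k}\Big),
\]
which is exactly the linear term already computed in the proof of Lemma~\ref{lem:T1}. That computation gives $\sum_\ell\alpha_\ell=0$ and, crucially, the exact identity $\sum_\ell\alpha_\ell^2=T_1/\{c(1-c)\}=(1-\tau)/\{c(1-c)\}=\Theta\!\big(1/\{c(1-c)\}\big)$, which is the handle I will use on the $\alpha$-side and which already carries the cancellation of the $c(1-c)$ scale uniformly in $c$.

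For $\beta_\ell$, two structural facts do the work: $N$ is linear in $\bm v$, so all its second derivatives vanish, and $N(\bm\mu)=0$ (observed equals expected at the mean), so every Hessian term of $g=N/S$ carrying a factor $N$ drops. What remains is curvature coming from $S$ alone: the $(d,d)$ block of the Hessian is zero, while at $\bm\mu$
\[
g_{d_{jL},Y_{kL}}=-\frac{w_jw_k^2(1-2c)}{2\,S^3(\bm\mu)\,Y_k},\qquad
g_{Y_{jL},Y_{kL}}=\frac{w_jw_k(w_j+w_k)(1-2c)}{2\,S^3(\bm\mu)\,Y_jY_k}.
\]
Since the active $a_{Y_{kL},\ell}$ are precisely those with $k\in\mathcal K_\ell:=\{k:t_k\le T_\ell\}$ and the only active $a_{d_{kL},\ell}$ is $k=k(\ell)$ when $\delta_\ell=1$, the double sum defining $\beta_\ell$ separates into one–dimensional sums over $\mathcal K_\ell$ and collapses to $\beta_\ell=-(1-2c)\,\widetilde\Lambda_\ell\,\alpha_\ell/S^2(\bm\mu)$, where $\widetilde\Lambda_\ell:=\sum_{k\in\mathcal K_\ell}w_k^2/Y_k$. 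Consequently
\[
T_2=-\frac{\kappa_3(1-2c)}{S^2(\bm\mu)}\sum_{\ell}\widetilde\Lambda_\ell\,\alpha_\ell^2,\qquad\text{so}\qquad |T_2|\ \le\ \frac{C\,(1-2c)^2}{n}\sum_{\ell}\widetilde\Lambda_\ell\,\alpha_\ell^2,
\]
using $|\kappa_3|\le c(1-c)$, bounded weights, and $S^2(\bm\mu)\asymp n\,c(1-c)$.

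It remains to bound the subject sum, and this is the main obstacle: one must show $\sum_\ell\widetilde\Lambda_\ell\,\alpha_\ell^2=o\!\big(1/\{c(1-c)\}\big)$ uniformly in $c$, since the crude estimate $\le(\max_\ell\widetilde\Lambda_\ell)\sum_\ell\alpha_\ell^2$ only delivers $O(1/\{c(1-c)\})$ and one must use that $\widetilde\Lambda_\ell$ is small for all but a vanishing fraction of subjects. The plan is: (i) with bounded weights, reduce to $\sum_\ell\big(\sum_{k\in\mathcal K_\ell}Y_k^{-1}\big)\alpha_\ell^2$; (ii) insert $\alpha_\ell^2=S^{-2}(\bm\mu)\,\big(w_{k(\ell)}\delta_\ell-\sum_{k\in\mathcal K_\ell}w_k/Y_k\big)^2$ and expand, turning the quantity into $S^{-2}(\bm\mu)$ times subject sums of products of partial Nelson--Aalen increments; (iii) interchange the order of summation so that each weight $Y_k^{-1}$ is re-summed against its at–risk count $Y_k$, reducing everything to sums of the type $\sum_k Y_k^{-1}=\Theta(\log n)$ and $\sum_k Y_k^{-2}=\Theta(1)$ over event–time indices, controlled by \eqref{sum-Yk-inverse} and the standing hypothesis $\sum_k Y_k^{-2}=O(1)$; and (iv) verify that the resulting bound is of strictly smaller order than $S^2(\bm\mu)/\{c(1-c)\}\asymp n/\{c(1-c)\}$. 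Because the $c(1-c)$ factor cancels exactly between $\kappa_3$, $S^{-2}(\bm\mu)$, and $\sum_\ell\alpha_\ell^2$ (just as in Lemma~\ref{lem:T1}), the remaining estimation involves only event–time sums and carries no $c$–dependence, so the bound is automatically uniform in $c$; the bounded factor $(1-2c)^2$ then absorbs the last bit of slack, giving $|T_2|=o\!\big(1/(nc(1-c))\big)$.
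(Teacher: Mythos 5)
Your route is genuinely different from the paper's: the paper never computes $m_{ijk}$ explicitly, but instead applies Cauchy--Schwarz over $(j,k)$, bounds $\sum_{j,k}g_{jk}^2=O(S^{-6})$ from the Hessian scales, and invokes a crude moment bound $\sum_{i,j,k}m_{ijk}^2=O(n)$ to get the second factor. Your exact cumulant factorization over subjects, together with the clean identities $\alpha_\ell=S^{-1}(w_{k(\ell)}\delta_\ell-\Lambda_\ell)$ and $\beta_\ell=-(1-2c)\widetilde\Lambda_\ell\alpha_\ell/S^2(\bm\mu)$, is correct as far as it goes and is in fact sharper than the paper's argument.

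However, there is a genuine gap at exactly the point you flag as ``the main obstacle,'' and it is not a gap that steps (i)--(iv) can close. You need $\sum_\ell\widetilde\Lambda_\ell\,\alpha_\ell^2=o\big(1/\{c(1-c)\}\big)$, but this is a sum of \emph{nonnegative} terms, so no cancellation is available, and a constant fraction of subjects contribute terms of the typical size. Concretely,
\[
\sum_\ell\widetilde\Lambda_\ell\,\alpha_\ell^2
=\frac{1}{S^2(\bm\mu)}\sum_\ell\widetilde\Lambda_\ell\big(w_{k(\ell)}\delta_\ell-\Lambda_\ell\big)^2 ,
\]
and under bulk regularity ($Y_k\asymp n-k+1$, $D_n\asymp n$) every subject whose event or censoring time lies past a fixed fraction of the timeline has $\widetilde\Lambda_\ell=\sum_{k\in\mathcal K_\ell}w_k^2/Y_k\ \ge\ c_0>0$; for a positive fraction of these the residual $\big(w_{k(\ell)}\delta_\ell-\Lambda_\ell\big)^2$ is also bounded below. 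Hence the subject sum is $\Theta(n)$, not $o(n)$, and therefore $\sum_\ell\widetilde\Lambda_\ell\alpha_\ell^2=\Theta\big(1/\{c(1-c)\}\big)$ --- the same order as $\sum_\ell\alpha_\ell^2$. Re-summing over event-time indices (your steps (ii)--(iii)) only reproduces this $\Theta(n)$; the bounds $\sum_kY_k^{-1}=\Theta(\log n)$ and $\sum_kY_k^{-2}=O(1)$ cannot shrink it. Feeding this back into your exact formula gives $T_2=-\kappa_3(1-2c)S^{-2}(\bm\mu)\sum_\ell\widetilde\Lambda_\ell\alpha_\ell^2=-\Theta\big((1-2c)^2/\{n\,c(1-c)\}\big)$, i.e.\ a nonpositive contribution of the \emph{same} order as $T_{3,I}$ rather than the claimed $o\big(1/\{n\,c(1-c)\}\big)$. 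So either your conditioning convention departs from the paper's in a way that must be reconciled, or the smallness asserted in the lemma cannot be obtained by this (otherwise correct) computation; in either case the proof is not complete, and the final estimate cannot simply be asserted as a plan.
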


\begin{proof}
At $\bm v=\bm\mu$,
\[
g_{d_{kL}}=\frac{w_k}{S}=O(S^{-1}),\qquad 
g_{Y_{kL}}=-\,\frac{w_k}{Y_k S}=O\big((Y_k S)^{-1}\big).
\]
Differentiating $N/S$ shows the nonzero Hessian blocks satisfy the refined bounds
\[
g_{d_{jL},Y_{kL}} = O\!\Big(\frac{1}{n\,S^3}\Big),\qquad
g_{Y_{jL},Y_{kL}} = O\!\Big(\frac{1}{n^2\,S^3}\Big),
\]
which come from the $S^{-1}$ derivative acting on the \emph{averaged} variance $S^2=\sum_\ell w_\ell^2\{\cdot\}$ and yield the extra $1/n$ factors.

By Cauchy--Schwarz over $(j,k)$,
\begin{equation}\label{eq:T2-CS}
|T_2|
\;\le\;
\Big(\sum_{j,k} g_{jk}^2\Big)^{1/2}
\Big(\sum_{j,k}\big(\sum_i g_i\,m_{ijk}\big)^2\Big)^{1/2}.
\end{equation}
The refined Hessian bounds imply
$\sum_{j,k} g_{jk}^2 = O(S^{-6})$ or  $ \Big(\sum_{j,k} g_{jk}^2\Big)^{1/2}=O(S^{-3}).$
Moreover,
$$
\sum_i g_i^2 
= \sum_k \Big(g_{d_{kL}}^2 + g_{Y_{kL}}^2\Big)
\;\lesssim\; \frac{\sum_k w_k^2}{S^2} + \frac{\sum_k w_k^2/Y_k^2}{S^2}
= \Theta\!\Big(\frac{1}{c(1-c)}\Big), $$
using $S^2\asymp n\,c(1-c)$ and $\sum_k Y_k^{-2}=O(1)$. With the standard decomposition 
$\bm v-\bm\mu=\sum_{u=1}^n\xi_u$ (independent subject contributions), only within-subject third moments contribute, and one has 
$\sum_{i,j,k} m_{ijk}^2=O(n)$. Hence
\[
\sum_{j,k}\Big(\sum_i g_i\,m_{ijk}\Big)^2
\;\le\;
\Big(\sum_i g_i^2\Big)\Big(\sum_{i,j,k} m_{ijk}^2\Big)
= O\!\Big(\frac{n}{c(1-c)}\Big),
\]
so the second factor in \eqref{eq:T2-CS} is $O\!\big(\sqrt{n}/\sqrt{c(1-c)}\big)$.

Combining the two factors,
\[
|T_2|
= O\!\Big(\frac{1}{S^3}\Big)\cdot O\!\Big(\frac{\sqrt n}{\sqrt{c(1-c)}}\Big)
= O\!\Big(\frac{1}{\sqrt n\,S^2\sqrt{c(1-c)}}\Big)
= O\!\Big(\frac{1}{\sqrt n\,n\,c(1-c)}\Big)
= o\!\Big(\frac{1}{n\,c(1-c)}\Big),
\]
uniformly in $c\in(0,1)$.
\end{proof}

\noindent 
Finally, we turn to the third term \(T_3\). As shown below, the third–moment contribution \(T_2\) is of smaller order (uniformly in \(c\)) and is therefore negligible relative to \(T_3\).

\begin{lemma} \label{lem:T3-formal}
Under the assumptions of Theorem~\ref{thm:main}, there exists a bounded function \(\kappa:(0,1)\to[0,\infty)\) such that, uniformly in \(c\in(0,1)\),
\[
T_3(c)\;=\;\frac{\kappa(c)}{n\,c(1-c)}\;+\;o\!\Big(\frac{1}{n\,c(1-c)}\Big).
\]
Moreover, for any fixed \(\varepsilon\in(0,1/2)\) there exists a constant \(\kappa_0>0\) such that
\[
\kappa(c)\ \ge\ \kappa_0\qquad\text{for all }c\in(0,\varepsilon]\cup[1-\varepsilon,1).
\]
\end{lemma}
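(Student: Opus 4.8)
The plan is to compute $T_3(c)=\tfrac14\sum_{i,j,k,\ell}g_{ij}g_{k\ell}(m_{ijk\ell}-\Sigma_{ij}\Sigma_{k\ell})$ by organizing the quadruple sum according to the three types of nonzero Hessian blocks: the $(d_{jL},Y_{kL})$ cross-block, of order $O(1/(nS^3))$ by \eqref{eq:hessian-scales}, and the $(Y_{iL},Y_{jL})$ block, of order $O(1/(n^2S^3))$. The $(d_{iL},d_{jL})$ block vanishes at $\bm\mu$ (since $N/S$ is linear in the $d_{kL}$), so only blocks containing at least one $Y$-derivative survive. First I would expand the bracketed cumulant-type quantity $m_{ijk\ell}-\Sigma_{ij}\Sigma_{k\ell}$ using the standard decomposition $\bm v-\bm\mu=\sum_{u=1}^n\xi_u$ into independent subject contributions: the ``disconnected'' pairing $\Sigma_{ij}\Sigma_{k\ell}$ is cancelled by one of the three Wick pairings in $m_{ijk\ell}$, leaving the genuine fourth cumulant (a single-subject term, $O(n)$ total mass) plus the two remaining cross-pairings $\Sigma_{ik}\Sigma_{j\ell}+\Sigma_{i\ell}\Sigma_{jk}$. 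Feeding this back in, $T_3(c)$ splits into a ``cumulant part'' and a ``cross-covariance part.''

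For the \emph{cross-covariance part}, the dominant contribution comes from $\sum g_{ij}g_{k\ell}\,\Sigma_{ik}\Sigma_{j\ell}$ with all four indices of $Y$-type: using $g_{Y_{iL},Y_{jL}}=O(1/(n^2S^3))$, $\Sigma_{Y_{iL},Y_{kL}}=\Cov(Y_{iL},Y_{kL})$, and invoking the overlap regularity Assumption~\ref{asm:overlap} to sum the covariances to order $n^3c(1-c)$, one collapses the quadruple sum to something of the form $\big(\text{const}\big)\cdot (n^{-2}S^{-3})^2\cdot(n^3c(1-c))^2/(\text{normalization})$. Being careful with the index bookkeeping — the two covariance factors each contribute a factor $n^3c(1-c)$ while the Hessian factors contribute $S^{-6}n^{-4}$, and $S^6\asymp n^3[c(1-c)]^3$ — this produces exactly a term of the form $\kappa_{\mathrm{cov}}(c)/(n\,c(1-c))$ with $\kappa_{\mathrm{cov}}$ bounded on $(0,1)$; the mixed $(d,Y)$-type cross terms are checked to be of the same or smaller order by the same scaling, using \eqref{sum-Yk-inverse} to control the $\sum 1/Y_k$, $\sum 1/Y_k^2$ factors. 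For the \emph{cumulant part}, since the fourth cumulant is carried by single subjects its total mass is $O(n)$ (as in the $T_2$ argument), so a Cauchy–Schwarz bound against $\sum_{i,j}g_{ij}^2=O(S^{-6})$ gives $O(\sqrt n\cdot S^{-3})=o(1/(n\,c(1-c)))$, i.e.\ it lands in the remainder. Collecting, $\kappa(c)$ is identified as the bounded limit of $n\,c(1-c)\,T_3(c)$, coming entirely from the cross-covariance pairings.

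For the \emph{lower bound} on $\kappa(c)$ near the boundary, the key observation is that the surviving cross-covariance term is a sum of \emph{squares} — e.g.\ schematically $\tfrac14\sum_{i,j,k,\ell}g_{ij}g_{k\ell}\Sigma_{ik}\Sigma_{j\ell}=\tfrac14\big\|\,\Sigma^{1/2}_{\!\cdot}\,(g_{ij})\,\Sigma^{1/2}_{\!\cdot}\,\big\|_F^2\ge 0$ after recognizing the sum as a trace of a product of positive-semidefinite matrices — so $\kappa(c)\ge0$ automatically, and strict positivity follows because the relevant quadratic form is nondegenerate: the Hessian block $(g_{Y_{iL},Y_{jL}})$ is not identically zero and the covariance $\Sigma$ restricted to the $Y$-coordinates is strictly positive definite (the $Y_{kL}(c)$ are genuinely correlated, non-deterministic for $c\in(0,1)$). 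To get a uniform constant $\kappa_0$ on $(0,\varepsilon]\cup[1-\varepsilon,1)$, I would show $\kappa(c)$ extends continuously to $(0,1)$ and is bounded away from zero on any compact subinterval avoiding $\{0,1\}$ — but here one must be slightly more careful, since $\varepsilon$ is small and $c\to0$ is allowed: the point is that after extracting the explicit $c(1-c)$ factor, what remains of the covariance ratios tends to a strictly positive limit as $c\to0$ (the overlap scaling $n^3c(1-c)$ is \emph{exact} to leading order, per Assumption~\ref{asm:overlap}), so $\kappa(0^+)=\lim_{c\to0}\kappa(c)>0$ exists and $\kappa_0:=\inf_{c\in(0,\varepsilon]\cup[1-\varepsilon,1)}\kappa(c)>0$ by continuity and compactness of the closed interval $[0,\varepsilon]$.

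The main obstacle I anticipate is the bookkeeping in the cross-covariance part: correctly enumerating which of the $g_{ij}g_{k\ell}$ index-type combinations (pure-$Y$, mixed, and the $d$-blocks that appear only through the Hessian off-diagonal) contribute at the leading order $1/(n\,c(1-c))$ versus which are absorbed into the $o(\cdot)$ remainder, and verifying that the overlap-regularity scaling in Assumption~\ref{asm:overlap} is precisely what is needed to make the leading coefficient $\kappa(c)$ both finite (upper bound) and bounded below (for the second claim). Establishing that $\kappa(c)$ has a genuine strictly-positive limit as $c\to0$ — rather than merely $\kappa(c)>0$ for each fixed $c$ with a possibly-vanishing infimum — is the delicate step, and it is exactly there that the word ``canonical'' in the overlap assumption (the $c(1-c)$ factor being the \emph{exact} leading behavior, with a nonzero multiplicative constant) does the essential work.
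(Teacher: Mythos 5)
Your overall skeleton matches the paper's: apply the Isserlis--Wick decomposition to write $T_3$ as a nonnegative pairing term $\tfrac12\mathrm{tr}(\bm H\bm\Sigma\bm H\bm\Sigma)=\tfrac12\|\bm\Sigma^{1/2}\bm H\bm\Sigma^{1/2}\|_F^2$ plus a fourth-cumulant correction, kill the cumulant part by a Cauchy--Schwarz bound using the $O(n)$ single-subject cumulant mass, and extract $\kappa(c)$ from the pairing term via the overlap-regularity scaling. That much is right and is exactly the paper's route.

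The genuine gap is in your identification of the dominant block and, consequently, in your positivity argument. You attribute the leading $1/\{n\,c(1-c)\}$ contribution to the pure-$Y$ block $g_{Y_{iL},Y_{jL}}=O(n^{-2}S^{-3})$, and you argue $\kappa(c)>0$ from nondegeneracy of $\bm\Sigma$ restricted to the $Y$-coordinates. The paper instead computes the mixed block explicitly, $g_{d_{jL},Y_{iL}}=-\tfrac{(1-2c)}{2S^3}\,w_j\,w_i^2/Y_i$, and shows that \emph{this} block drives the leading term: contracting it against $\Var(d_{jL})\asymp c(1-c)$ and the $Y$-overlap covariance yields $T_{3,I}=\tfrac{(1-2c)^2}{n\,c(1-c)}\Lambda(c)+o(\cdot)$ with $\Lambda$ bounded above and below, hence $\kappa(c)=(1-2c)^2\Lambda(c)$. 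The factor $(1-2c)^2$ is essential: it vanishes at $c=1/2$, which is why the lemma claims a lower bound only on $(0,\varepsilon]\cup[1-\varepsilon,1)$ (where $\kappa(c)\ge(1-2\varepsilon)^2\inf\Lambda$), not on all of $(0,1)$. Your nondegeneracy argument, taken at face value, would give a lower bound everywhere, which is inconsistent with the actual structure of the leading term; this signals that the positivity of $\kappa$ cannot be obtained from generic nondegeneracy of $\bm\Sigma_{YY}$ and must come from the explicit $c$-dependence of the Hessian. Relatedly, your worry about whether $\kappa(0^+)$ exists as a limit is unnecessary: the paper never establishes continuity of $\kappa$, only uniform two-sided bounds $m\le\Lambda_n(c)\le M$ (with a compactness/subsequence step to define $\Lambda$), which already give $\kappa_0=(1-2\varepsilon)^2 m>0$ on the boundary regions. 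You should redo the bookkeeping starting from the exact formula for $g_{d_{jL},Y_{iL}}$ rather than from the order-of-magnitude bound on $g_{Y_{iL},Y_{jL}}$.
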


\begin{proof}
By the Isserlis–Wick decomposition \citep{Isserlis:1918, Wick:1950, Laurent:2025},
\[
m_{ijkl}-\Sigma_{ij}\Sigma_{k\ell}
\;=\;\Sigma_{ik}\Sigma_{j\ell}+\Sigma_{i\ell}\Sigma_{jk}+\kappa_{ijkl},
\]
where \(\kappa_{ijkl}\) is the joint fourth cumulant. Substituting this into the definition of \(T_3\) yields
\begin{equation}
\label{T3}
T_3 \;=\; \frac{1}{2} \,\mathrm{tr}( \bm{H\Sigma H\Sigma})\;+\; \frac{1}{4} \sum_{i,j,k,l} \,g_{ij}\,g_{k\ell}\,\kappa_{ijkl} \;=\; T_{3,I} + T_{3,II},
\end{equation}
where \(\bm{H}=(g_{ij})\) is the Hessian of \(g\) at \(\bm\mu\) and \(\bm{\Sigma} =\Cov(\bm v)\).  Since 
$$\frac{1}{2} \,\mathrm{tr}(\bm{H\Sigma H\Sigma})=\frac{1}{2} \|\bm{\Sigma}^{1/2}\bm{H}\bm{\Sigma}^{1/2}\|_F^2\ge 0,$$ 
the pairing term $T_{3,I}$ is nonnegative.

From the derivative formulas at \(\bm v=\bm\mu\),
\[
g_{d_{jL},d_{iL}}=0,\qquad
g_{d_{jL},Y_{iL}}=-\,\frac{(1-2c)}{2S^3}\,w_j\,\frac{w_i^2}{Y_i},\qquad
g_{Y_{iL},Y_{jL}}=O\!\big(n^{-2}S^{-3}\big),
\]
with \(S^2\asymp n\,c(1-c)\) and \(U_i:=w_i^2/Y_i\). A direct contraction then gives
\[
T_{3,I} \;=\;\frac{(1-2c)^2}{2S^6}\,
\Big(\sum_{j=1}^{D_n} w_j^2\,\Var(d_{jL})\Big)\,
\Big(\sum_{i,k=1}^{D_n} U_iU_k\,\Sigma_{Y_i,Y_k}\Big)
\;+\;o\!\Big(\frac{1}{n\,c(1-c)}\Big).
\]
Since \(\Var(d_{jL})=b_j(1-b_j)=c(1-c)+O(n^{-1})\) and \(\sum_j w_j^2=\Theta(n)\), the first factor is \(\Theta\!\big(n\,c(1-c)\big)\). By overlap regularity,
\(\sum_{i,k}\Sigma_{Y_i,Y_k}\asymp n^3 c(1-c)\). Bulk regularity guarantees that a positive fraction of indices satisfy \(Y_k\asymp n\), so on this bulk block \(U_iU_k\asymp n^{-2}\) (with bounded weights), and the corresponding bulk–bulk sub-sum captures a fixed fraction of the overlap mass:
\[
\sum_{i,k} U_iU_k\,\Sigma_{Y_i,Y_k}
\;\asymp\; \frac{1}{n^2}\,\sum_{i,k}\Sigma_{Y_i,Y_k}
\;\asymp\; n\,c(1-c).
\]
Using \(S^6\asymp \{n\,c(1-c)\}^3\), we obtain
$$
T_{3,I}
\;=\;\frac{(1-2c)^2}{n\,c(1-c)}\,\Lambda_n(c)
\;+\;o\!\Big(\frac{1}{n\,c(1-c)}\Big),
$$
where 
\[
\Lambda_n(c)
\;:=\;\frac{\Big\{\sum_{j=1}^{D_n} w_j^2\,\Var(d_{jL})\Big\}\,
\Big\{\sum_{i,k=1}^{D_n} U_iU_k\,\Sigma_{Y_i,Y_k}\Big\}}
{2\,\{n\,c(1-c)\}^2}.
\]
The bounds above imply the existence of constants \(0<m<M<\infty\) such that
\(m\le \Lambda_n(c)\le M\) uniformly in \(c\in(0,1)\) and \(n\). By compactness, along any sequence \(n\to\infty\) there exists a subsequence for which \(\Lambda_n(c)\) converges pointwise to a function \(\Lambda(c)\) with \(m\le \Lambda(c)\le M\). Hence
\[
T_{3,I}
\;=\;\frac{(1-2c)^2}{n\,c(1-c)}\,\Lambda(c)
\;+\;o\!\Big(\frac{1}{n\,c(1-c)}\Big),
\]
with \(\Lambda(c)\) bounded and bounded away from zero uniformly on \((0,1)\).

For the cumulant component $T_{3,II}$, standard bounds for fourth cumulants of sums of independent, bounded arrays (e.g., \citealp{Serfling:1980,Bhattacharya:1986,Hall:1992}) and the derivative magnitudes yield
\[
\big| T_{3,II} \big| = \Big|\frac{1}{4}\, \sum_{i,j,k,l} g_{ij}\,g_{k\ell}\,\kappa_{ijkl}\Big|
\;\le\; C\,\|\bm{H}\|_F^2
\;=\; O\!\Big(\frac{(1-2c)^2}{S^6}\Big)
\;=\; o\!\Big(\frac{1}{n\,c(1-c)}\Big),
\]
uniformly in \(c\). Therefore
\[
T_3(c)
\;=\;\frac{(1-2c)^2}{n\,c(1-c)}\,\Lambda(c)
\;+\;o\!\Big(\frac{1}{n\,c(1-c)}\Big).
\]
Define
\[
\kappa(c):=(1-2c)^2\,\Lambda(c).
\]
Then \(\kappa(c)\in[0,\infty)\), bounded on \((0,1)\), and for any \(\varepsilon\in(0,1/2)\),
\[
\inf_{c\in(0,\varepsilon]\cup[1-\varepsilon,1)} \kappa(c)
\;\ge\; \big(1-2\varepsilon\big)^2\,\inf_{c}\Lambda(c)
\;=:\;\kappa_0\;>\;0.
\]
In particular, \(\kappa(c)\) does not contain factors of \(c(1-c)\) that could cancel the divergence of \(1/\{n\,c(1-c)\}\) as \(c\to 0\) or \(c\to 1\). This completes the proof.
\end{proof}

In the proof, we have decomposed $T_3$ into two components and analyze their contributions. Writing $T_3 = T_{3,I} + T_{3,II}$, we identify $T_{3,I}$ as the dominant `pairing' contribution and $T_{3,II}$ as a smaller `cumulant' correction. We show that $T_{3,I}$ provides the leading $\kappa(c)/ \{n,c(1-c)\}$ term (with $\kappa(c)\ge0$), while $T_{3,II}=o(1/(n,c(1-c)))$ is of lower order. This yields the stated asymptotic form of $T_3(c)$ and ensures $\kappa(c)$ inherits nonnegativity from the pairing term. We then examine the behavior of $\kappa(c)$ near the boundaries $c=0,1$ to conclude it stays positive, so it cannot cancel the $c(1-c)$ denominator. % This guarantees that $q(c)^2$ is maximized at the interval’s ends.
While higher--order moments, e.g., $m_{ijk}$ and $m_{ijkl}$, appear symbolically in the expansion, their specific formulas are not needed; uniform boundedness and variance scaling suffice for all results.

\subsection{Proof of Theorem \ref{thm:main}}

\begin{proof}
By the variance expansion from the multivariate delta/Edgeworth method,
\[
\Var(q(c)) = T_1+T_2+T_3+o(|T_3|).
\]
Lemma~\ref{lem:T1} gives \(T_1 = 1-\tau\) with \(\tau=o(1)\) free of \(c\).  
Lemma~\ref{lem:T2} shows \(T_2=o(1/(n c(1-c)))\) uniformly.  
Lemma~\ref{lem:T3-formal} yields
\[
T_3(c)=\frac{\kappa(c)}{n\,c(1-c)}+o\!\Big(\frac{1}{n\,c(1-c)}\Big),
\]
with \(\kappa(c)=O(1)\) and, for any fixed $\varepsilon>0$, \(\kappa(c)\ge\kappa_0>0\) on $(0,\varepsilon]\cup[1-\varepsilon,1)$.  
Hence
\begin{equation}\label{eq:varq-expansion-short}
\Var(q(c)) = 1-\tau+\frac{\kappa(c)}{n\,c(1-c)}+o\!\Big(\frac{1}{n\,c(1-c)}\Big),
\end{equation}
uniformly in $c$.

Compare boundary and interior ranges. On any interior set with $c(1-c)\ge M/n$, \eqref{eq:varq-expansion-short} implies
\[
\sup_{\text{interior}} \Var(q(c)) \;\le\; 1-\tau + \frac{C}{M}+o(1),
\]
for some $C>0$. On the boundary set where $c\le M/n$ or $1-c\le M/n$, we have $c(1-c)\le M/n$ and $\kappa(c)\ge\kappa_0$, giving
\[
\sup_{\text{boundary}} \Var(q(c)) \;\ge\; 1-\tau + \frac{\kappa_0}{M}+o(1).
\]
For sufficiently large $M$ (so that $\kappa_0>C$), there is a strict variance gap:
\[
\sup_{\text{boundary}} \Var(q(c))-\sup_{\text{interior}} \Var(q(c)) \;\ge\; \delta>0.
\]

Under the null, the finite array $\{q(c):c\in\mathcal C_n\}$ is asymptotically Gaussian with mean zero and continuous correlation. Standard localization results for Gaussian suprema \citep{Leadbetter:1983} imply that the maximizer of $|q(c)|$ lies, with probability tending to one, in the subset where variance is maximized. The strict gap above shows that the maximum variance occurs in end–cut regions of order $1/n$. This proves the theorem.
\end{proof}

\section{End–Cut Preference with Smooth Sigmoid Surrogate (SSS)}
\label{sec-SSS}

\citet{Su:2024} proposed the \emph{smooth sigmoid surrogate} (SSS) as an alternative to greedy search (GS) in CART. The key idea is to replace the hard indicator \( I(Z \le c) \) with a smooth sigmoid \[ s_a(z;c)\ :=\ \sigma\!\big(a(c-z)\big)\ =\ \frac{1}{1+e^{a(z-c)}}, \] where $\sigma(x)\ :=\ 1/\big(1+e^{-x} \big)$ and \(a>0\) controls the steepness (larger \(a\) more closely approximates the indicator). By smoothing the split rule, SSS converts the discrete, non-smooth GS problem into a smooth optimization in \(c\), enabling stable gradient-based search. This smoothing substantially reduces the erratic behavior of the GS splitting statistic, improves computational efficiency, enhances the search for the population-optimal cutpoint, and markedly mitigates the end-cut preference (ECP). 

In what follows, we apply SSS to survival trees by maximizing a sigmoid-based approximation to the logrank statistic. Our theoretical analysis, paralleling Section~\ref{sec-ECP} with modifications specific to the smooth surrogate, shows that SSS attenuates or avoids ECP.

In the same setting as Section~\ref{sec-ECP}, we first formalize the SSS split and notation.  For each event time \(t_k\), define the \emph{soft} risk-set and failure quantities
$$
Y^{(a)}_{kL}(c)\ :=\ \sum_{i=1}^n I(T_i \ge t_k)\,s_a(Z_i;c),
\mbox{~~~and~~~}
d^{(a)}_{kL}(c)\ :=\ \sum_{i=1}^n I\!\big(T_i = t_k,\ \delta_i=1\big)\,s_a(Z_i;c),
$$
which together form the primitive vector
$$\bm v=(d^{(a)}_{1L},\ldots,d^{(a)}_{D_nL},Y^{(a)}_{1L},\ldots,Y^{(a)}_{D_nL})^T.$$
Under no ties (\(d_k\equiv 1\)), if \(i_k\) denotes the unique subject failing at \(t_k\), then \(d^{(a)}_{kL}(c)=s_a(Z_{i_k};c)\). Let
\[
b^{(a)}_k(c)\ :=\ \frac{Y^{(a)}_{kL}(c)}{Y_k}\in(0,1),
\qquad
N_a(c;\bm v)\ :=\ \sum_{k=1}^{D_n} w_k\Big\{d^{(a)}_{kL}(c)-b^{(a)}_k(c)\Big\},
\]
and define the variance scale
\begin{equation}\label{eq:Sa-def}
S_a^2(c;\bm v)\ :=\ \sum_{k=1}^{D_n} w_k^2\,b^{(a)}_k(c)\,\big(1-b^{(a)}_k(c)\big).
\end{equation}
We consider the smoothed logrank statistic
$$ q_a(c)\ := \frac{N_a(c;\bm v)}{S_a(c;\bm v)}\ = \ g(\bm v).$$
The optimal cutoff point \(\hat{c}\) is estimated as the maximizer of \(q_a(c)^2\).

\subsection{Auxiliary Lemmas}
\label{sec-lemmas-SSS}

We now present several lemmas concerning the properties of \(s_a(Z_j;c)\) and the data–dependent scale \(S_a(c;\bm v)\). The first gives exact conditional moments of the smoothed failure term at a given event time.

\begin{lemma} %[Conditional mean and variance]
\label{lem:soft-d-mean-var-final}
Fix \(c\in(0,1)\) and \(t_k\). Let \(\mathcal A_k\) be the \(\sigma\)-field at \(t_k\), which fixes the risk set \(\mathcal R_k=\{i:\,T_i\ge t_k\}\), its size \(Y_k=|\mathcal R_k|\), and the values \(\{Z_j:\,j\in\mathcal R_k\}\). Under the null and no ties, the failing index \(i_k\) is conditionally uniform on \(\mathcal R_k\). Writing \(S_j:=s_a(Z_j;c)\),
\[
\E\!\big[d^{(a)}_{kL}(c)\mid \mathcal A_k\big]=b^{(a)}_k(c):=\frac{1}{Y_k}\sum_{j\in\mathcal R_k}S_j,
\]
and
\begin{equation}\label{eq:soft-var-correct}
\Var\!\big(d^{(a)}_{kL}(c)\mid \mathcal A_k\big]
\ =\ b^{(a)}_k(c)\big(1-b^{(a)}_k(c)\big)\;-\;\frac{1}{Y_k}\sum_{j\in\mathcal R_k} S_j\big(1-S_j\big).
\end{equation}
Consequently,
\(
0\le \Var(d^{(a)}_{kL}\mid\mathcal A_k)\le b^{(a)}_k(1-b^{(a)}_k)
\),
with equality on the right if and only if \(S_j\in\{0,1\}\) for all \(j\in\mathcal R_k\) (the hard–split limit), and equality on the left if and only if \(S_j\) is constant over \(\mathcal R_k\).
\end{lemma}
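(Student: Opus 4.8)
The plan is to compute the conditional mean and variance of $d^{(a)}_{kL}(c)=s_a(Z_{i_k};c)$ directly, exploiting the fact that, under the null and no ties, conditional on $\mathcal A_k$ the failing index $i_k$ is uniformly distributed over the risk set $\mathcal R_k$. Since $\mathcal A_k$ fixes the values $\{Z_j:j\in\mathcal R_k\}$, the only randomness left is the draw of $i_k$, so $d^{(a)}_{kL}$ is a discrete random variable taking value $S_j$ with probability $1/Y_k$ for each $j\in\mathcal R_k$. The mean is then immediate: $\E[d^{(a)}_{kL}\mid\mathcal A_k]=Y_k^{-1}\sum_{j\in\mathcal R_k}S_j=b^{(a)}_k(c)$.

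For the variance, I would write $\Var(d^{(a)}_{kL}\mid\mathcal A_k)=\E[S_{i_k}^2\mid\mathcal A_k]-(\E[S_{i_k}\mid\mathcal A_k])^2=Y_k^{-1}\sum_{j\in\mathcal R_k}S_j^2-(b^{(a)}_k)^2$. The key algebraic step is to rewrite $Y_k^{-1}\sum_j S_j^2$ as $Y_k^{-1}\sum_j S_j - Y_k^{-1}\sum_j S_j(1-S_j) = b^{(a)}_k - Y_k^{-1}\sum_j S_j(1-S_j)$, using the pointwise identity $S_j^2 = S_j - S_j(1-S_j)$. Substituting gives $\Var(d^{(a)}_{kL}\mid\mathcal A_k)=b^{(a)}_k - Y_k^{-1}\sum_j S_j(1-S_j) - (b^{(a)}_k)^2 = b^{(a)}_k(1-b^{(a)}_k) - Y_k^{-1}\sum_{j\in\mathcal R_k}S_j(1-S_j)$, which is exactly \eqref{eq:soft-var-correct}. (Note this is just the standard decomposition of the variance of a finite-population draw from $\{S_j\}$ into the ``Bernoulli-like'' term $\bar S(1-\bar S)$ minus the within-set spread of the $S_j(1-S_j)$ values.)

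For the inequalities: the correction term $Y_k^{-1}\sum_{j\in\mathcal R_k}S_j(1-S_j)$ is a sum of nonnegative quantities since $S_j\in(0,1)$, so $\Var(d^{(a)}_{kL}\mid\mathcal A_k)\le b^{(a)}_k(1-b^{(a)}_k)$, with equality iff every term $S_j(1-S_j)$ vanishes, i.e. $S_j\in\{0,1\}$ for all $j\in\mathcal R_k$ — the hard-split limit. For the left inequality, nonnegativity of a conditional variance is automatic, but I would also verify it directly: $\Var(d^{(a)}_{kL}\mid\mathcal A_k)$ is the variance of the finite collection $\{S_j:j\in\mathcal R_k\}$ under the uniform draw, hence equals $Y_k^{-1}\sum_{j}(S_j-\bar S)^2\ge0$ with equality iff all $S_j$ are equal, i.e. $S_j$ is constant on $\mathcal R_k$. (This direct representation also re-derives the variance formula and is perhaps the cleanest route; one could present it first and then deduce \eqref{eq:soft-var-correct}.)

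None of the steps is a genuine obstacle — the proof is essentially the elementary identity $\Var(X)=\E X^2-(\E X)^2$ for a uniform draw from a finite multiset, combined with $S_j^2=S_j-S_j(1-S_j)$. The only point requiring a little care is bookkeeping: making sure the ``finite-population correction equals one'' because $d_k\equiv1$ (a single draw, so no $\tfrac{Y_k-d_k}{Y_k-1}$ factor appears), and correctly identifying the two equality cases. I would state the two representations (the $\bar S(1-\bar S)$ minus spread form, and the plain $Y_k^{-1}\sum(S_j-\bar S)^2$ form) and note they agree, which simultaneously establishes both inequalities and both equality conditions.
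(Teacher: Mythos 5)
Your proposal is correct and follows essentially the same route as the paper's proof: treating $d^{(a)}_{kL}$ as a uniform draw from the finite multiset $\{S_j\}$, computing the mean directly, obtaining the variance from $\E[S_J^2]-(\E[S_J])^2$, and applying the pointwise identity $S_j^2=S_j-S_j(1-S_j)$ to reach \eqref{eq:soft-var-correct}. The only cosmetic difference is that the paper justifies nonnegativity via Jensen's inequality for the concave map $x\mapsto x(1-x)$, whereas you use the (equivalent) direct representation $Y_k^{-1}\sum_j(S_j-\bar S)^2\ge 0$; the equality cases are identified identically.
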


\begin{proof}
Conditional on \(\mathcal A_k\), let \(J\) denote the (random) failing index at time \(t_k\). Under the null and no ties, \(\Pr(J=j\mid\mathcal A_k)=1/Y_k\) for each \(j\in\mathcal R_k\), and
\(
d^{(a)}_{kL}(c)=S_J
\)
with \(S_j:=s_a(Z_j;c)\) fixed given \(\mathcal A_k\). Therefore,
\[
\E\!\big[d^{(a)}_{kL}(c)\mid \mathcal A_k\big]
=\sum_{j\in\mathcal R_k}\frac{1}{Y_k}\,S_j
=\frac{1}{Y_k}\sum_{j\in\mathcal R_k}S_j
= b^{(a)}_k(c).
\]
Similarly,
\[
\Var\!\big(d^{(a)}_{kL}(c)\mid \mathcal A_k\big)
=\E[S_J^2\mid\mathcal A_k]-\big(\E[S_J\mid\mathcal A_k]\big)^2
=\frac{1}{Y_k}\sum_{j\in\mathcal R_k}S_j^2-\Big(\frac{1}{Y_k}\sum_{j\in\mathcal R_k}S_j\Big)^2.
\]
Using the identity \(S_j^2=S_j-S_j(1-S_j)\), we obtain
\[
\frac{1}{Y_k}\sum_{j}S_j^2
=\frac{1}{Y_k}\sum_{j}\Big\{S_j-S_j(1-S_j)\Big\}
=b^{(a)}_k(c)-\frac{1}{Y_k}\sum_{j}S_j(1-S_j),
\]
which yields \eqref{eq:soft-var-correct} after subtracting \(b^{(a)}_k(c)^2\).
Nonnegativity follows since \(x\mapsto x(1-x)\) is concave on \([0,1]\), so by Jensen,
\(
\frac{1}{Y_k}\sum_j S_j(1-S_j) \le b^{(a)}_k(c)\{1-b^{(a)}_k(c)\}
\),
and the difference is \(\Var(d^{(a)}_{kL}\mid\mathcal A_k)\ge 0\). Equality on the right of the display holds iff \(\sum_j S_j(1-S_j)=0\), i.e., \(S_j\in\{0,1\}\) for all \(j\); equality on the left holds iff \(S_J\) is almost surely constant given \(\mathcal A_k\), i.e., \(S_j\) is constant over \(\mathcal R_k\).
\end{proof}

This lemma provides the exact conditional moments of the smoothed failure term. The key feature is the subtraction of \(\frac{1}{Y_k}\sum_{j\in\mathcal R_k} S_j(1-S_j)\), which is strictly positive unless the split is effectively hard. Thus SSS \emph{strictly reduces} the per–time conditional variance relative to GS whenever some \(S_j\in(0,1)\). This reduction propagates into the first–order term and the overall variance scale, and is a principal mechanism by which SSS mitigates boundary-driven volatility and, consequently, ECP.

We next move on to exact formulas and uniform bounds for the single–subject moments of the sigmoid weight $s_a(Z;c).$ Assume \(Z\sim\mathrm{Unif}(0,1)\) and set
\[
b_a(c):=\E\big[s_a(Z;c)\big] \mbox{~~~and~~~}
\psi_a(c):=\Var\!\big(s_a(Z;c)\big).
\]
Let \(L(x):=\log(1+e^x)\). Using \(\sigma'(u)=\sigma(u)\{1-\sigma(u)\}\), \(\int\sigma(u)\,du=L(u)\), and \(\int\sigma(u)^2\,du=L(u)-\sigma(u)\), the change of variables \(u=a(c-z)\) gives
\begin{align}
\E[s_a(Z;c)]
&=\int_0^1 \sigma\!\big(a(c-z)\big)\,dz
=\frac{1}{a}\int_{a(c-1)}^{ac}\!\sigma(u)\,du
=\frac{L(ac)-L(a(c-1))}{a},\nonumber\\
\E[s_a(Z;c)^2]
&=\frac{1}{a}\int_{a(c-1)}^{ac}\!\sigma(u)^2\,du
=\frac{1}{a}\Big(L(ac)-L(a(c-1))-\sigma(ac)+\sigma\!\big(a(c-1)\big)\Big).\nonumber
\end{align}
Hence
\begin{equation}\label{eq:psi-explicit}
\psi_a(c)\ =\ \E[s_a^2]-\E[s_a]^2
\ =\ b_a(c)\big(1-b_a(c)\big)\;-\;\frac{1}{a}\Big(\sigma(ac)-\sigma\!\big(a(c-1)\big)\Big).
\end{equation}

\begin{lemma} %[Uniform bounds for \(b_a,\psi_a\)]
\label{lem:psi-a-bounds-final}
For \(a\ge 1\), uniformly in \(c\in(0,1)\),
\[
\big|\,b_a(c)-c\,\big|\ \le\ \frac{2\log 2}{a},\qquad
\big|\,\psi_a(c)-c(1-c)\,\big|\ \le\ \frac{C_1}{a},
\]
and there exists \(C_2>0\) such that
\[
\psi_a(c)\ \ge\ \frac{C_2}{a}\qquad\text{for }c\in[0,1/a]\cup[1-1/a,1].
\]
\end{lemma}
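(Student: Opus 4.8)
The plan is to work directly from the explicit formula \eqref{eq:psi-explicit}, namely
\[
\psi_a(c)=b_a(c)\bigl(1-b_a(c)\bigr)-\frac{1}{a}\Bigl(\sigma(ac)-\sigma\bigl(a(c-1)\bigr)\Bigr),
\]
together with the closed form \(b_a(c)=\bigl(L(ac)-L(a(c-1))\bigr)/a\) where \(L(x)=\log(1+e^x)\). The three claims are essentially three estimates of elementary functions, so no probability is involved beyond what is already recorded; the work is careful calculus with the logistic function.

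First I would prove the bound on \(b_a(c)-c\). Using \(L(x)-L(y)=\int_y^x\sigma(u)\,du\) and the change of variables \(u=a(c-z)\), one has \(b_a(c)=\int_0^1\sigma\bigl(a(c-z)\bigr)\,dz\), so \(b_a(c)-c=\int_0^1\bigl(\sigma(a(c-z))-\mathbf 1\{z\le c\}\bigr)\,dz\). Splitting the integral at \(z=c\) and bounding \(|\sigma(u)-\mathbf 1\{u\ge 0\}|\le \sigma(-|u|)=1/(1+e^{|u|})\le e^{-|u|}\), the two pieces are each at most \(\int_0^\infty e^{-a t}\,dt=1/a\); summing gives \(|b_a(c)-c|\le 2/a\). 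To recover the stated constant \(2\log 2/a\) one instead uses the exact identity \(L(ac)-ac=L(-ac)=\log(1+e^{-ac})\) together with the monotone, bounded behaviour of \(L(x)-x_+\) (it equals \(\log 2\) at \(x=0\) and decays to \(0\) as \(|x|\to\infty\)); writing \(a\,b_a(c)-a c=\bigl(L(ac)-ac\bigr)-\bigl(L(a(c-1))\bigr)\) and noting \(0\le L(x)-x_+\le \log 2\) for all \(x\in\mathbb R\), the difference of two such terms is at most \(\log 2\) in absolute value, but since \(c-1<0\) and \(c>0\) the two correction terms have the same sign only in a controlled way — a slightly more careful bookkeeping yields the factor \(2\log 2\). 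I would then obtain the \(\psi_a(c)\) bound by combining this with the explicit subtracted term: \(|b_a(1-b_a)-c(1-c)|\le |b_a-c|\cdot(|1-b_a-c|)\le |b_a-c|\le 2\log 2/a\), and \(\bigl|\tfrac1a(\sigma(ac)-\sigma(a(c-1)))\bigr|\le 1/a\) trivially, so \(|\psi_a(c)-c(1-c)|\le C_1/a\) with \(C_1=2\log 2+1\).

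The third claim — the lower bound \(\psi_a(c)\ge C_2/a\) on the end intervals \([0,1/a]\cup[1-1/a,1]\) — is the substantive part, because near the boundary \(c(1-c)\) itself is only of order \(1/a\), so the \(O(1/a)\) error term in the second bound is not good enough: one must show the explicit subtracted term \(\tfrac1a(\sigma(ac)-\sigma(a(c-1)))\) does not cancel the leading behaviour. I would treat \(c\in[0,1/a]\) (the other interval follows by the symmetry \(\psi_a(c)=\psi_a(1-c)\), which is visible from \eqref{eq:psi-explicit}). On this interval \(ac\in[0,1]\) while \(a(c-1)\le -(a-1)\le -a/2\) for \(a\ge 2\), so \(\sigma(a(c-1))\le e^{-a/2}\) is negligible, and \(\sigma(ac)\in[\sigma(0),\sigma(1)]=[1/2,\sigma(1)]\). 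Thus \(\psi_a(c)=b_a(c)(1-b_a(c))-\tfrac{\sigma(ac)}{a}+O(e^{-a/2}/a)\). Now \(b_a(c)=\tfrac1a\bigl(L(ac)-L(a(c-1))\bigr)=\tfrac1a\bigl(L(ac)+O(e^{-a/2})\bigr)\), and for \(x=ac\in[0,1]\) one has \(L(x)\ge L(0)=\log 2\), so \(b_a(c)\ge (\log 2)/a\,(1+o(1))\); also \(b_a(c)\le (L(1)+o(1))/a\le 2/a\), hence \(1-b_a(c)\ge 1/2\). Therefore \(b_a(c)(1-b_a(c))\ge (\log 2)/(2a)\,(1+o(1))\). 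Comparing with the subtracted \(\sigma(ac)/a\le \sigma(1)/a\approx 0.731/a\): the obstacle is that \(\log 2/2\approx 0.347<0.731\), so this crude splitting of the endpoint interval is \emph{not} enough — the difference could be negative. The fix is to shrink the interval: instead of \([0,1/a]\) I would first establish the bound on a smaller interval \([0,\varepsilon_0/a]\) for a suitable universal \(\varepsilon_0\), where one can Taylor-expand and show \(\psi_a(c)\) is comparable to \(c(1-c)+O(c^2)\) type behaviour, giving \(\psi_a(c)\gtrsim c(1-c)\gtrsim c\) when \(c\) is a constant multiple of \(1/a\); and separately, for \(c\in[\varepsilon_0/a,1/a]\), use that \(c(1-c)\asymp 1/a\) is bounded below by \(\varepsilon_0/(2a)\) while the second bound \(|\psi_a-c(1-c)|\le C_1/a\) is too weak — so here too one needs the refined expansion rather than the crude bound. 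Concretely, the right approach is to expand \(\psi_a(c)\) exactly: from \eqref{eq:psi-explicit}, write everything in terms of \(x=ac\) and \(y=a(c-1)\), use \(L(x)=\log 2+\tfrac x2+\tfrac{x^2}{8}+O(x^4)\) near \(0\) and the rapid decay of the \(y\)-terms, and verify that the \(\log 2\) constant terms and the subtracted \(\sigma(ac)\) combine so that \(\psi_a(c)=\tfrac{c}{?}+\cdots\) with a strictly positive leading coefficient; the key algebraic cancellation to check is that \(L(0)(1-L(0)/a)\) against \(\sigma(0)/a=1/(2a)\) at \(c=0\) gives exactly \(0\) (indeed \(\psi_a(0)=b_a(0)(1-b_a(0))-\tfrac1a(\sigma(0)-\sigma(-a))\), and as \(a\to\infty\) this is \(0\cdot1-0=0\) consistent with \(c(1-c)=0\)), and that the \emph{derivative} in \(c\) at \(c=0\) is of order \(1\), not smaller, so that \(\psi_a(c)\gtrsim c\) for \(c\lesssim 1/a\). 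I expect this derivative computation — showing \(\partial_c\psi_a(c)\big|_{c\in[0,1/a]}\) is bounded below by a positive constant, equivalently that \(\psi_a\) genuinely grows linearly off the boundary at the natural slope — to be the main technical obstacle, and the cleanest route is to differentiate the explicit formula \eqref{eq:psi-explicit}: \(\psi_a'(c)=b_a'(c)(1-2b_a(c))-\tfrac1a\bigl(a\sigma'(ac)-a\sigma'(a(c-1))\bigr)\) with \(b_a'(c)=\sigma(ac)-\sigma(a(c-1))\), and on \([0,1/a]\) this evaluates to \((\sigma(ac)+O(e^{-a/2}))(1-2b_a(c))-\sigma'(ac)+O(e^{-a/2})\); since \(b_a(c)=O(1/a)\to0\), the leading term is \(\sigma(ac)-\sigma'(ac)=\sigma(ac)\bigl(1-(1-\sigma(ac))\bigr)=\sigma(ac)^2\ge \sigma(0)^2=1/4\), which is bounded below by a positive constant uniformly on the interval — this is the clean positivity that makes everything work. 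Integrating, \(\psi_a(c)\ge \psi_a(0)+\tfrac14 c+o(1/a)\ge \tfrac14 c+o(1/a)\) on \([0,1/a]\), and evaluating at the right end or noting \(\psi_a(c)\ge \psi_a(\text{midpoint})\gtrsim 1/a\) inside the interval; taking \(C_2\) a suitably small universal constant (and absorbing the \(o(1/a)\) by possibly increasing the threshold on \(a\)) completes the proof, with the other interval handled by the \(c\leftrightarrow 1-c\) symmetry.
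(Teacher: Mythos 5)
Your first two bounds are correct and essentially match the paper's route: the identity $b_a(c)-c=\{r(ac)-r(a(c-1))\}/a$ with $r(x)=L(x)-x_+\in[0,\log 2]$ gives the first, and your factorization $b_a(1-b_a)-c(1-c)=(b_a-c)(1-b_a-c)$ is in fact cleaner than the paper's expansion and yields $C_1=2\log 2+1$ directly. You also correctly diagnose that a crude lower bound on the edge layer fails: bounding $b_a(1-b_a)$ below by roughly $(\log 2)/(2a)$ while the subtracted term is only bounded above by $\sigma(1)/a\approx 0.731/a$ leaves a quantity that could be negative. (This observation is apt: the paper's own proof of the third claim runs exactly such a crude argument and ends with the constant $\tfrac{1}{1+e}\bigl(1-\tfrac{1}{1+e}\bigr)-1$, which is negative, so a finer argument really is needed.) Your derivative computation is also correct: on $[0,1/a]$ one has $\psi_a'(c)=\{\sigma(ac)-\sigma(a(c-1))\}\{1-2b_a(c)\}-\sigma'(ac)+\sigma'(a(c-1))=\sigma(ac)^2+O(1/a)\ge\tfrac14-O(1/a)$.

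However, the final step has a genuine gap. Integrating the derivative bound from $0$ and discarding $\psi_a(0)$ (as you do when you write $\psi_a(c)\ge\tfrac14 c+o(1/a)$) only yields $\psi_a(c)\gtrsim c$, which is $\ge C_2/a$ only when $c$ is itself of order $1/a$; for $c=o(1/a)$ --- including $c=0$, which the lemma covers --- the bound degenerates. The remark that $\psi_a(c)\ge\psi_a(\text{midpoint})$ is also backwards, since $\psi_a$ is increasing on this interval, and your aside that $\psi_a(0)$ is ``$0$, consistent with $c(1-c)=0$'' misses that $\psi_a(0)=\Theta(1/a)$, which is exactly the order you need. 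The missing ingredient is a direct endpoint bound: writing $\E[s_a^2]=\tfrac1a\{L(ac)-L(a(c-1))-\sigma(ac)+\sigma(a(c-1))\}$ and noting that $x\mapsto L(x)-\sigma(x)$ has derivative $\sigma(x)^2\ge0$, so $L(ac)-\sigma(ac)\ge L(0)-\sigma(0)=\log 2-\tfrac12>0$ for $ac\ge0$, while the $a(c-1)$ terms are $O(e^{-(a-1)})$ and $(\E[s_a])^2=b_a(c)^2=O(1/a^2)$ on the edge layer, one gets $\psi_a(c)\ge(\log 2-\tfrac12)/a-O(1/a^2)$ uniformly on $[0,1/a]$, with the other edge by symmetry. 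That closes the gap (and in fact renders the integration step unnecessary); with this supplement your argument is complete and, on the third claim, more rigorous than the proof in the paper.
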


\begin{proof}
% First show bound for \(b_a(c)\).
Using the decomposition \(L(x)=x_+ + r(x)\) with \(0\le r(x)\le \log 2\) for all \(x\in\mathbb R\), and noting that for \(c\in(0,1)\) we have \((ac)_+=ac\) and \((a(c-1))_+=0\),
\[
b_a(c)=\frac{L(ac)-L(a(c-1))}{a}
=\frac{ac+r(ac)-r\!\big(a(c-1)\big)}{a}
=c+\frac{r(ac)-r\!\big(a(c-1)\big)}{a}.
\]
Hence \(\big|b_a(c)-c\big|\le \{\,|r(ac)|+|r(a(c-1))|\,\}/a \le 2\log 2/a\).

% Establish Bound for $\psi_a(c)$.
From \eqref{eq:psi-explicit},
\[
\psi_a(c)-c(1-c)
=\big(b_a(c)-c\big)\big(1-2c\big)\;-\;\big(b_a(c)-c\big)^2\;-\;\frac{1}{a}\Big(\sigma(ac)-\sigma\!\big(a(c-1)\big)\Big).
\]
Using \(|b_a(c)-c|\le 2\log 2/a\), \(|1-2c|\le 1\), and \(0<\sigma(ac)-\sigma\!\big(a(c-1)\big)<1\),
\[
\big|\,\psi_a(c)-c(1-c)\,\big|
\ \le\ \frac{2\log 2}{a}\ +\ \frac{4(\log 2)^2}{a}\ +\ \frac{1}{a}
\ \le\ \frac{C_1}{a},
\]
for a universal constant \(C_1>0\).

% Finallly, Establish an edge lower bound
Fix \(c\in[0,1/a]\), noting that the case \(c\in[1-1/a,1]\) is symmetric. On the subinterval \([c,\,c+1/a]\subset[0,1]\) we have \(a(z-c)\le 1\), so \(s_a(z;c)\ge \sigma(-1)=1/(1+e)\). Hence
\[
b_a(c)=\int_0^1 s_a(z;c)\,dz
\ \ge\ \int_{c}^{c+1/a} s_a(z;c)\,dz
\ \ge\ \frac{1}{(1+e)\,a}.
\]
Moreover, \(\sigma(ac)\le \sigma(1)\) and \(\sigma\!\big(a(c-1)\big)\le \sigma(0)=1/2\), so
\(
0<\sigma(ac)-\sigma\!\big(a(c-1)\big)\le \sigma(1)-\tfrac12<1
\).
Using \eqref{eq:psi-explicit},
\[
\psi_a(c)\ =\ b_a(c)\big(1-b_a(c)\big)\;-\;\frac{1}{a}\Big(\sigma(ac)-\sigma\!\big(a(c-1)\big)\Big)
\ \ge\ \frac{1}{(1+e)\,a}\Big(1-\frac{1}{1+e}\Big)\;-\;\frac{1}{a},
\]
and the right-hand side is \(\ge C_2/a\) for some universal \(C_2>0\) (choose, e.g., any \(C_2< \tfrac{1}{1+e}\big(1-\tfrac{1}{1+e}\big)-1\) truncated to a small positive constant). This yields the stated edge bound.
\end{proof}

The closed form \eqref{eq:psi-explicit} and Lemma~\ref{lem:psi-a-bounds-final} provide (i) a uniform \(O(a^{-1})\) approximation \(b_a(c)\approx c\), \(\psi_a(c)\approx c(1-c)\) for all \(c\), and (ii) a strictly positive lower bound \(\psi_a(c)\gtrsim a^{-1}\) near the edges. These facts are repeatedly used to (a) replace the hard factor \(c(1-c)\) by its softened analogue in scaling and overlap arguments, and (b) cap the boundary-driven variance inflation at order \(a/n\), which is the key mechanism by which SSS mitigates ECP.

We now turn to the SSS analogues of variance scaling and overlap regularity. The next two lemmas show that, under the same hard-case conditions, bulk regularity and overlap regularity (Assumptions~\ref{assump:bulk} and \ref{asm:overlap}), the corresponding `soft' properties are inherited, with no additional structural assumptions.

\begin{lemma}%[Soft variance scaling is inherited]
\label{lem:soft-scale-final}
Under the standing conditions (null, no ties, bounded \(\{w_k\}\), \(D_n\asymp n\), bulk regularity), one has
\[
\E\!\left[S_a^2(c;\bm v)\right]\ =\ \sum_{k=1}^{D_n}w_k^2\,\E\!\big[b^{(a)}_k(c)\{1-b^{(a)}_k(c)\}\big]
\ \asymp\ n\,\bar\psi_a(c)
\]
uniformly in \(c\in(0,1)\),  where \(\bar\psi_a(c)=b_a(c)\{1-b_a(c)\}+O(a^{-1})\) and \(b_a(c)=\E[s_a(Z;c)]\).
\end{lemma}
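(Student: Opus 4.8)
The first displayed identity in the statement is nothing but linearity of expectation applied to the definition \eqref{eq:Sa-def} of \(S_a^2(c;\bm v)\); the real content is the two–sided bound \(\sum_k w_k^2\,\E[b^{(a)}_k(c)\{1-b^{(a)}_k(c)\}]\asymp n\,\bar\psi_a(c)\), uniformly in \(c\). The plan is to compute \(\E[b^{(a)}_k(c)\{1-b^{(a)}_k(c)\}]\) in closed form by conditioning on the risk set, then bound the resulting pieces. Let \(\mathcal G_k=\sigma(\mathcal R_k)\) be the \(\sigma\)-field generated by \(\mathcal R_k=\{i:T_i\ge t_k\}\), so that \(Y_k\) is \(\mathcal G_k\)-measurable. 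Under the null, the survival data \(\{(T_i,\delta_i)\}\) are independent of \(\{Z_i\}\), hence conditionally on \(\mathcal G_k\) the values \(\{Z_j:j\in\mathcal R_k\}\) are still i.i.d.\ \(\mathrm{Unif}(0,1)\); thus \(b^{(a)}_k(c)=Y_k^{-1}\sum_{j\in\mathcal R_k}s_a(Z_j;c)\) is a mean of \(Y_k\) i.i.d.\ terms with mean \(b_a(c)\) and variance \(\psi_a(c)\), giving \(\E[b^{(a)}_k\mid\mathcal G_k]=b_a(c)\) and \(\E[(b^{(a)}_k)^2\mid\mathcal G_k]=b_a(c)^2+\psi_a(c)/Y_k\). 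By the tower property,
\[
\E\!\big[b^{(a)}_k(c)\{1-b^{(a)}_k(c)\}\big]=b_a(c)\{1-b_a(c)\}-\psi_a(c)\,\E[1/Y_k].
\]

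Summing against \(w_k^2\) gives \(\E[S_a^2(c;\bm v)]=b_a(c)\{1-b_a(c)\}\sum_k w_k^2-\psi_a(c)\sum_k w_k^2\,\E[1/Y_k]\). I would then bound the three ingredients uniformly in \(c\): bounded weights give \(\sum_k w_k^2\asymp D_n\asymp n\) (as in the proof of Lemma~\ref{lem:T1}); bulk regularity gives \(\E[1/Y_k]\asymp 1/(n-k+1)\) and hence \(\sum_k w_k^2\,\E[1/Y_k]=\Theta(\log n)\), as in \eqref{sum-Yk-inverse}; and the closed form \eqref{eq:psi-explicit} shows \(0\le\psi_a(c)\le b_a(c)\{1-b_a(c)\}\), since \(\sigma(ac)-\sigma(a(c-1))>0\). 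Consequently the subtracted term is nonnegative and at most \(b_a(c)\{1-b_a(c)\}\cdot O(\log n)=o\!\big(n\,b_a(c)\{1-b_a(c)\}\big)\) uniformly in \(c\), so
\[
\E[S_a^2(c;\bm v)]=b_a(c)\{1-b_a(c)\}\sum_k w_k^2\,\big(1+o(1)\big)\ \asymp\ n\,b_a(c)\{1-b_a(c)\},
\]
uniformly in \(c\in(0,1)\). Note that controlling the correction \emph{relative} to \(b_a(c)\{1-b_a(c)\}\), rather than in absolute terms, is what keeps the estimate valid even when \(b_a(c)\{1-b_a(c)\}\) is tiny.

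It remains to replace \(b_a(c)\{1-b_a(c)\}\) by \(\bar\psi_a(c)=b_a(c)\{1-b_a(c)\}+O(a^{-1})\) without changing the order, which reduces to the uniform lower bound \(b_a(c)\{1-b_a(c)\}\gtrsim a^{-1}\). Using that \(b_a\) is increasing in \(c\) with \(b_a(1-c)=1-b_a(c)\) and \(b_a(1/2)=1/2\), together with the edge estimate \(b_a(c)\ge 1/\{(1+e)a\}\) for \(c\le 1/a\) from the proof of Lemma~\ref{lem:psi-a-bounds-final} (and \(b_a(c)\ge c\ge 1/a\) for \(c\in[1/a,1/2]\)), one gets \(b_a(c)\ge 1/\{(1+e)a\}\) and \(1-b_a(c)\ge 1/2\) for all \(c\le 1/2\), and symmetrically for \(c\ge 1/2\); hence \(b_a(c)\{1-b_a(c)\}\ge 1/\{2(1+e)a\}\) uniformly, and therefore \(\bar\psi_a(c)\asymp b_a(c)\{1-b_a(c)\}\) and the claim follows. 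The main obstacle throughout is this uniformity all the way into the boundary layer \(c=O(1/n)\): there the leading term \(b_a(c)\{1-b_a(c)\}\) may be as small as \(\Theta(1/a)\), so both the finite-sample correction \(\psi_a(c)\sum_k w_k^2\,\E[1/Y_k]\) and the \(O(a^{-1})\) slack in \(\bar\psi_a(c)\) must be shown negligible relative to it — which is exactly what the inequality \(\psi_a\le b_a(1-b_a)\) and the edge lower bound deliver.
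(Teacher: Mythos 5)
Your proposal is correct and follows essentially the same route as the paper's proof: condition on the risk-set $\sigma$-field, use $Z\perp T$ to get $\E[b^{(a)}_k\mid\mathcal A_k]=b_a(c)$ and $\Var(b^{(a)}_k\mid\mathcal A_k)=\psi_a(c)/Y_k$, sum against $w_k^2$, and invoke bulk regularity for $\sum_k w_k^2\asymp n$ and $\sum_k w_k^2\,\E[1/Y_k]\asymp\log n$. Your two refinements — bounding the $\Theta(\log n)$ correction \emph{relative} to $b_a(c)\{1-b_a(c)\}$ via $0\le\psi_a\le b_a(1-b_a)$, and supplying the uniform lower bound $b_a(c)\{1-b_a(c)\}\gtrsim a^{-1}$ (which the paper only records later, in Lemma~\ref{lem:psi-lower-rev}) — tighten the uniformity claim in the boundary layer but do not change the argument.
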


\begin{proof}
Recall \(b^{(a)}_k(c)=Y^{(a)}_{kL}(c)/Y_k=(1/Y_k)\sum_{j\in\mathcal R_k} s_a(Z_j;c)\). Since \(Z\perp T\), conditioning on the risk-set \(\sigma\)-field \(\mathcal A_k\) (which fixes \(\mathcal R_k\) and \(Y_k\)) gives
\[
\E\!\big[b^{(a)}_k(c)\mid \mathcal A_k\big]
=\frac{1}{Y_k}\sum_{j\in \mathcal R_k}\E\!\big[s_a(Z_j;c)\big]
= b_a(c),
\]
and, by independence across subjects,
\[
\Var\!\big(b^{(a)}_k(c)\mid \mathcal A_k\big)
=\frac{1}{Y_k^2}\sum_{j\in\mathcal R_k}\Var\!\big(s_a(Z_j;c)\big)
=\frac{1}{Y_k}\,\psi_a(c),
\]
where \(\psi_a(c)=\Var\!\big(s_a(Z;c)\big)\) does not depend on \(\mathcal A_k\). Therefore
\[
\E\!\big[b^{(a)}_k(c)\{1-b^{(a)}_k(c)\}\big]
=\E\!\big[b^{(a)}_k(c)\big]-\E\!\big[(b^{(a)}_k(c))^2\big]
=b_a(c)\{1-b_a(c)\}-\Var\!\big(b^{(a)}_k(c)\big),
\]
and taking expectations in the last display yields \(\Var(b^{(a)}_k(c))=\E\{\Var(b^{(a)}_k\mid \mathcal A_k)\}=\psi_a(c)/Y_k\). Summing over event times gives
$$ \E\!\left[S_a^2(c;\bm v)\right]
=\Big(\sum_{k=1}^{D_n} w_k^2\Big)\,b_a(c)\{1-b_a(c)\}\;-\;\psi_a(c)\sum_{k=1}^{D_n}\frac{w_k^2}{Y_k}. $$
By bulk regularity \(Y_k\asymp n-k+1\) uniformly for a positive fraction of indices, \(D_n\asymp n\), and bounded \(\{w_k\}\), we have \(\sum_k w_k^2\asymp n\) and \(\sum_k w_k^2/Y_k\asymp \sum_{k\le n} 1/k\asymp \log n\). Using \(\psi_a(c)=b_a(c)\{1-b_a(c)\}+O(a^{-1})\) uniformly in \(c\), we obtain
$$
\E\!\left[S_a^2(c;\bm v)\right]
= n\,b_a(c)\{1-b_a(c)\}\ +\ O(\log n)\ +\ O\!\Big(\frac{\log n}{a}\Big)
\ \asymp\ n\,\bar\psi_a(c),
$$
uniformly in \(c\in(0,1)\), which proves the claim.
\end{proof}

Lemma~\ref{lem:soft-scale-final} shows that the natural variance scale for the smoothed statistic satisfies the same \(n\)-order growth as in the hard case, with the replacement \(c(1-c)\mapsto \bar\psi_a(c)=b_a(c)\{1-b_a(c)\}+O(a^{-1})\). This is the key step that lets all variance comparisons be carried out with \(\bar\psi_a(c)\) in place of \(c(1-c)\), uniformly in \(c\).

\begin{lemma} %[Soft overlap regularity is inherited]
\label{lem:soft-overlap-final}
Under the standing conditions (null, no ties, bounded \(\{w_k\}\), dense failures, overlap regularity), 
\[
\sum_{i=1}^{D_n}\sum_{k=1}^{D_n}\Cov\!\big(Y^{(a)}_{iL}(c),\,Y^{(a)}_{kL}(c)\big)\ \asymp\ n^3\,\bar\psi_a(c),
\]
uniformly in \(c\in(0,1)\), where \(\bar\psi_a(c)\) is given as in Lemma~\ref{lem:soft-scale-final}.
\end{lemma}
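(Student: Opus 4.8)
\textbf{Proof proposal for Lemma~\ref{lem:soft-overlap-final}.}

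The plan is to mirror the hard-case overlap argument, but replacing indicators by sigmoid weights and using the single-subject bounds from Lemma~\ref{lem:psi-a-bounds-final}. First I would expand the double sum by writing each soft risk set as a sum over subjects, $Y^{(a)}_{kL}(c)=\sum_{u=1}^n I(T_u\ge t_k)\,s_a(Z_u;c)$, and, using independence across subjects $(T_u,Z_u)$, note that only diagonal ($u=u'$) terms survive in the covariance:
\[
\sum_{i,k}\Cov\!\big(Y^{(a)}_{iL}(c),Y^{(a)}_{kL}(c)\big)
=\sum_{u=1}^n \Cov\!\Big(\sum_i I(T_u\ge t_i)s_a(Z_u;c),\ \sum_k I(T_u\ge t_k)s_a(Z_u;c)\Big)
=\sum_{u=1}^n \Var\!\big(R_u\,s_a(Z_u;c)\big),
\]
where $R_u:=\sum_k I(T_u\ge t_k)$ counts the event times at which subject $u$ is still at risk. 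Since $Z_u\perp T_u$, a conditioning step on $T_u$ (equivalently on $R_u$) gives $\Var(R_u s_a(Z_u;c))=\E[R_u^2]\,\psi_a(c)+\Var(R_u)\,b_a(c)^2$. Under bulk/dense-failure regularity $R_u\asymp n$ for a positive fraction of subjects (those with early failure/censoring times have $R_u$ of order $n$), so $\sum_u \E[R_u^2]\asymp n^3$ and $\sum_u \Var(R_u)=O(n^3)$ as well, actually of the same order.

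The upper bound is then immediate: $\psi_a(c)\le b_a(c)\{1-b_a(c)\}\le \bar\psi_a(c)+O(a^{-1})$ and $b_a(c)^2\le 1$, so each term is $O(n^2)$ and the sum is $O(n^3)$; to get the sharper $\asymp n^3\bar\psi_a(c)$ one observes $\psi_a(c)+b_a(c)^2=\E[s_a(Z;c)^2]\le b_a(c)$ and also $\psi_a(c)+b_a(c)^2\ge \psi_a(c)\asymp\bar\psi_a(c)$ when $\bar\psi_a(c)\gtrsim a^{-1}$; for the bulk block of subjects this pins the sum between constant multiples of $n^3\bar\psi_a(c)$. For the lower bound I would restrict to the bulk subjects with $R_u\asymp n$, drop the nonnegative $\Var(R_u)b_a(c)^2$ term if convenient, and use $\psi_a(c)\ge c(1-c)-C_1/a$ from Lemma~\ref{lem:psi-a-bounds-final} together with the edge bound $\psi_a(c)\ge C_2/a$ on $[0,1/a]\cup[1-1/a,1]$ to conclude $\psi_a(c)\gtrsim\bar\psi_a(c)$ uniformly in $c$; this yields $\sum_u\Var(R_u s_a(Z_u;c))\gtrsim n^3\bar\psi_a(c)$.

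The main obstacle is handling the regime $c\to0$ or $c\to1$ uniformly: there $b_a(c)^2\to0$ or $\to1$ and one must be careful that the $b_a(c)^2\Var(R_u)$ contribution neither swamps nor fails to support the claimed $\bar\psi_a(c)$ order. The resolution is exactly the two-sided control of $\psi_a$ from Lemma~\ref{lem:psi-a-bounds-final}: near an edge, $\bar\psi_a(c)\asymp a^{-1}$ and $\psi_a(c)\asymp a^{-1}$ too, so the $\E[R_u^2]\psi_a(c)$ piece already produces the right order $n^3 a^{-1}$, while the extra $b_a(c)^2\Var(R_u)$ piece is $O(n^3)$ times a bounded factor and can at worst be absorbed into the upper constant $M$ (it is nonnegative, so it never hurts the lower bound). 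A secondary technical point is verifying $\sum_u\E[R_u^2]\asymp n^3$ and $\sum_u\Var(R_u)=O(n^3)$ directly from bulk regularity; this follows because $R_u=\#\{k:t_k\le T_u\}$ is a monotone functional of $T_u$ with $R_u\le D_n\asymp n$, and a positive fraction of subjects have $T_u$ beyond the median event time, giving $R_u\gtrsim n$ — precisely the same bulk mechanism invoked in the proof of Lemma~\ref{lem:T1} and in Assumption~\ref{asm:overlap} itself. Assembling the matching upper and lower bounds completes the proof.
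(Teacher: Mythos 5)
Your argument is essentially identical to the paper's: the same subject-level decomposition \(Y^{(a)}_{kL}=\sum_u S_u I_{u,k}\), the same reduction to diagonal (\(u=u'\)) terms by independence across subjects, and the same two-term split \(\Var(R_uS_u)=\E[R_u^2]\,\psi_a(c)+\Var(R_u)\,b_a(c)^2\) (which the paper writes equivalently as \(\Var(S_u)\sum_{i,k}\Pr(T_u\ge t_{i\vee k})+(\E S_u)^2\sum_{i,k}\Cov(I_{u,i},I_{u,k})\)), followed by the \(n^2\) bulk scalings and the bounds of Lemma~\ref{lem:psi-a-bounds-final}. You even flag, more explicitly than the paper does, the delicate \(b_a(c)^2\Var(R_u)\) contribution near the edges, and your treatment of it is no less careful than the paper's own.
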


\begin{proof}
Decompose the soft risk-set process by subjects:
\[
Y^{(a)}_{kL}(c)\ =\ \sum_{u=1}^n s_a(Z_u;c)\,I\{T_u\ge t_k\}
=: \sum_{u=1}^n S_u\,I_{u,k}.
\]
Because different subjects are independent, \(\Cov(Y^{(a)}_{iL},Y^{(a)}_{kL})=\sum_{u=1}^n \Cov(S_u I_{u,i},\,S_u I_{u,k})\). Hence
\[
\sum_{i,k}\Cov\!\big(Y^{(a)}_{iL},Y^{(a)}_{kL}\big)
=\sum_{u=1}^n\ \sum_{i,k}\Cov\!\big(S_u I_{u,i},\,S_u I_{u,k}\big).
\]
Using \(Z_u\perp T_u\), each \(S_u=s_a(Z_u;c)\) is independent of the survival indicators \(\{I_{u,k}\}_{k}\). Therefore, for fixed \(u\),
\[
\Cov(S_u I_{u,i},\,S_u I_{u,k})
=\E[S_u^2]\,\Pr(T_u\ge t_{i\vee k})\ -\ (\E[S_u])^2\,\Pr(T_u\ge t_i)\Pr(T_u\ge t_k).
\]
Summing over \((i,k)\) and re-arranging,
\[
\sum_{i,k}\Cov(S_u I_{u,i},S_u I_{u,k})
=\Var(S_u)\,\sum_{i,k}\Pr(T_u\ge t_{i\vee k})\ +\ (\E[S_u])^2\,\sum_{i,k}\Cov(I_{u,i},I_{u,k}).
\]
By the dense-failure and regular-risk-set conditions, the survival–overlap sums scale as
\[
\sum_{i,k}\Pr(T_u\ge t_{i\vee k})\ \asymp\ n^2,
\qquad
\sum_{i,k}\Cov(I_{u,i},I_{u,k})\ \asymp\ n^2,
\]
uniformly in \(u\). (For example, with \(Y_k/n\approx \Pr(T\ge t_k)\approx 1-k/n\), one has
\(\sum_{i,k}\Pr(T\ge t_{i\vee k})=\sum_{m=1}^{D_n} (2m-1)\Pr(T\ge t_m)\asymp n^2\).)
Since \(\Var(S_u)=\psi_a(c)=b_a(c)\{1-b_a(c)\}+O(a^{-1})\) uniformly in \(c\), and \((\E[S_u])^2=b_a(c)^2\), we obtain, for each \(u\),
\[
\sum_{i,k}\Cov(S_u I_{u,i},S_u I_{u,k})
\ =\ \Theta\!\big(n^2\big)\,\Big(b_a(c)\{1-b_a(c)\}+O(1/a)\Big)
\ =\ \Theta\!\big(n^2\,\bar\psi_a(c)\big).
\]
Summing over \(u=1,\dots,n\) yields
\[
\sum_{i,k}\Cov\!\big(Y^{(a)}_{iL}(c),Y^{(a)}_{kL}(c)\big)
\ =\ \Theta\!\big(n\cdot n^2\,\bar\psi_a(c)\big)
\ =\ \Theta\!\big(n^3\,\bar\psi_a(c)\big),
\]
uniformly in \(c\), as claimed.
\end{proof}

Lemma~\ref{lem:soft-overlap-final} transfers the \(n^3\) overlap scaling to the smoothed process, with the same replacement \(c(1-c)\mapsto \bar\psi_a(c)\). Together with Lemma~\ref{lem:soft-scale-final}, this justifies carrying over the hard-case Edgeworth expansion to SSS by substituting \(\bar\psi_a(c)\) for \(c(1-c)\). Near the boundaries, \(\bar\psi_a(c)\gtrsim 1/a\) caps the fourth-order correction at \(O(a/n)\), which is the central mechanism by which SSS mitigates or avoids ECP.

\subsection{Edgeworth Expansion for \(\Var(q_a(c))\)}
\label{sec-Edgeworth-SSS}

We first collect the derivatives of \(q_a(c)\) with respect to the primitive vector \(\bm v\), and then apply the multivariate second–order delta/Edgeworth expansion to obtain a uniform variance approximation.

With \(g(\bm v)=N_a(\bm v)/S_a(\bm v)\) and \(S=S_a(c;\bm v)\), at \(\bm v=\bm\mu:=\E(\bm v)\),
\begin{equation}\label{eq:grad-SSS-final}
\frac{\partial g}{\partial d^{(a)}_{jL}}\Big|_{\bm\mu}=\frac{w_j}{S(\bm\mu)},\qquad
\frac{\partial g}{\partial Y^{(a)}_{jL}}\Big|_{\bm\mu}=-\,\frac{w_j}{Y_j\,S(\bm\mu)}.
\end{equation}
These follow from linearity of \(N_a(\bm v)=\sum_k w_k\{d^{(a)}_{kL}-b^{(a)}_k\}\) and \(b^{(a)}_k=Y^{(a)}_{kL}/Y_k\), and from \(S(\bm v)=\big\{\sum_k w_k^2 b^{(a)}_k(1-b^{(a)}_k)\big\}^{1/2}\). Differentiating \(S\) via
\[
\frac{\partial S}{\partial Y^{(a)}_{iL}}
=\frac{1}{2S}\,\frac{\partial}{\partial Y^{(a)}_{iL}}\Big(\sum_k w_k^2 b^{(a)}_k(1-b^{(a)}_k)\Big)
=\frac{w_i^2}{2S}\,\frac{\partial}{\partial Y^{(a)}_{iL}}\Big( b^{(a)}_i- (b^{(a)}_i)^2\Big)
=\frac{w_i^2}{2S}\,\frac{1-2b^{(a)}_i}{Y_i},
\]
we obtain the dominant (mixed) Hessian block
\begin{equation}\label{eq:hess-SSS-final}
\frac{\partial^2 g}{\partial d^{(a)}_{jL}\,\partial Y^{(a)}_{iL}}\Big|_{\bm\mu}
=\ -\,\frac{w_j}{S(\bm\mu)^2}\,\frac{\partial S}{\partial Y^{(a)}_{iL}}\Big|_{\bm\mu}
=\ -\,\frac{w_j\,w_i^2}{2\,Y_i\,S(\bm\mu)^3}\,\big(1-2b_i(\bm\mu)\big),
\end{equation}
while \(\partial^2 g/\partial d^{(a)}\partial d^{(a)}=0\) and
\(\partial^2 g/\partial Y^{(a)}\partial Y^{(a)}=O\!\big((n^2 S^3)^{-1}\big)\) uniformly in indices, using bulk regularity, \(D_n\asymp n\), and bounded \(\{w_k\}\).

We next apply the multivariate second–order delta/Edgeworth expansion with
\[
\Var\!\big(q_a\big)\;=\;T_{1,a}+T_{2,a}+T_{3,a}+o\!\big(|T_{3,a}|\big),
\]
where \(T_{1,a}=\nabla g(\bm\mu)^\top\Sigma\nabla g(\bm\mu)\), \(T_{2,a}=\sum_{i,j,k} g_i g_{jk} m_{ijk}\), and \(T_{3,a}=\tfrac{1}{4}\sum_{i,j,k,\ell} g_{ij}g_{k\ell}\big(m_{ijkl}-\Sigma_{ij}\Sigma_{k\ell}\big)\).

\begin{lemma}[Second–order expansion for SSS]\label{lem:SSS-expansion-rev}
Under the standing conditions, Lemmas~\ref{lem:soft-scale-final}–\ref{lem:soft-overlap-final}, and bulk regularity, uniformly for \(c\in(0,1)\),
\[
\Var\!\big(q_a(c)\big)\;=\;1-\tau_a\;+\;\frac{\kappa_a(c)}{n\,\bar\psi_a(c)}\;+\;o\!\Big(\frac{1}{n\,\bar\psi_a(c)}\Big),
\]
where
\[
\tau_a\ =\ \frac{\sum_{k=1}^{D_n} w_k^2\,\E\!\big[\Delta_{k,a}(c)\big]}{\sum_{k=1}^{D_n} w_k^2\,b_a(c)\{1-b_a(c)\}}\ +\ O\!\Big(\frac{\log n}{n}\Big),
\qquad
\Delta_{k,a}(c):=\frac{1}{Y_k}\sum_{j\in\mathcal R_k} s_a(Z_j;c)\big(1-s_a(Z_j;c)\big),
\]
and \(\kappa_a(c)=\Lambda_a(c)\big(1-2\,b_a(c)\big)^2\) with \(\Lambda_a(c)\) bounded and bounded away from zero on \((0,1)\). In particular, \(\tau_a=O(1/a)+O(\log n/n)\), and for any fixed \(\varepsilon\in(0,1/2)\) there exists \(\kappa_0>0\) such that \(\kappa_a(c)\ge \kappa_0\) on \((0,\varepsilon]\cup[1-\varepsilon,1)\).
\end{lemma}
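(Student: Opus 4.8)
The plan is to reproduce, for the smoothed statistic $q_a(c)=N_a(c;\bm v)/S_a(c;\bm v)$, the three–term analysis carried out for the hard case in Section~\ref{sec-main1}, writing $\Var(q_a(c))=T_{1,a}+T_{2,a}+T_{3,a}+o(|T_{3,a}|)$ and treating the three pieces in turn. The guiding principle is that every occurrence of $c(1-c)$ in the hard–case arguments is replaced by its softened analogue $\bar\psi_a(c)=b_a(c)\{1-b_a(c)\}+O(a^{-1})$: Lemma~\ref{lem:soft-scale-final} supplies $S(\bm\mu)^2=\bigl(\sum_k w_k^2\bigr)b_a(c)\{1-b_a(c)\}\asymp n\,\bar\psi_a(c)$, Lemma~\ref{lem:soft-overlap-final} supplies the $n^3\,\bar\psi_a(c)$ overlap scaling, and Lemma~\ref{lem:psi-a-bounds-final} supplies the uniform–in–$c$ control $b_a(c)=c+O(a^{-1})$, $\psi_a(c)=c(1-c)+O(a^{-1})$, together with the edge lower bound $\bar\psi_a(c)\gtrsim a^{-1}$. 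The gradient and Hessian formulas \eqref{eq:grad-SSS-final}–\eqref{eq:hess-SSS-final} have exactly the same shape as in the hard case, except that the factor $(1-2c)$ in the dominant mixed Hessian block is replaced by $(1-2b_a(c))$; this is what ultimately produces $\kappa_a(c)=\Lambda_a(c)\,(1-2b_a(c))^2$.

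\emph{First–order term.} By \eqref{eq:grad-SSS-final} the linearization telescopes exactly as in the proof of Lemma~\ref{lem:T1}: $\sum_i g_i(\bm\mu)(v_i-\mu_i)=S(\bm\mu)^{-1}\sum_k w_k\{d^{(a)}_{kL}-b^{(a)}_k\}=N_a(c;\bm v)/S(\bm\mu)$, so $T_{1,a}=\Var(N_a(c))/S(\bm\mu)^2$. I then compute $\Var(N_a(c))$ by the law of total variance, conditioning on the risk–set $\sigma$–field $\mathcal A$. Since $\E[N_a\mid\mathcal A]=0$ and, by Lemma~\ref{lem:soft-d-mean-var-final}, $\Var(N_a\mid\mathcal A)=\sum_k w_k^2\{b^{(a)}_k(1-b^{(a)}_k)-\Delta_{k,a}(c)\}$, taking expectations and using the identity $\E[b^{(a)}_k(1-b^{(a)}_k)]=b_a(c)\{1-b_a(c)\}-\psi_a(c)/Y_k$ established inside the proof of Lemma~\ref{lem:soft-scale-final} gives $\Var(N_a(c))=S(\bm\mu)^2-\psi_a(c)\sum_k w_k^2/Y_k-\sum_k w_k^2\,\E[\Delta_{k,a}(c)]$. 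Dividing by $S(\bm\mu)^2$ yields $T_{1,a}=1-\tau_a$ with $\tau_a$ as stated. The first summand of $\tau_a$ is bounded by $\bigl(\sum_k w_k^2/Y_k\bigr)\big/\bigl(\sum_k w_k^2\bigr)\asymp\log n/n$, because $\psi_a(c)\le b_a(c)\{1-b_a(c)\}$ by \eqref{eq:psi-explicit}; the remaining ratio is estimated via Lemma~\ref{lem:psi-a-bounds-final} (together with $\E[\Delta_{k,a}(c)]=b_a(c)\{1-b_a(c)\}-\psi_a(c)$), giving the bound $\tau_a=O(1/a)+O(\log n/n)$.

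\emph{Second– and third–order terms.} For $T_{2,a}$ the Cauchy–Schwarz argument of Lemma~\ref{lem:T2} goes through with the single substitution $c(1-c)\mapsto\bar\psi_a(c)$: the sigmoid weights lie in $[0,1]$, so the subject–wise decomposition $\bm v-\bm\mu=\sum_u\xi_u$ has bounded summands and hence $\sum_{i,j,k}m_{ijk}^2=O(n)$; by \eqref{eq:grad-SSS-final}, Lemma~\ref{lem:soft-scale-final} and $\sum_k Y_k^{-2}=O(1)$ one has $\sum_i g_i^2=\Theta(1/\bar\psi_a(c))$; and the refined Hessian bounds from \eqref{eq:hess-SSS-final} give $\sum_{j,k}g_{jk}^2=O(S^{-6})$. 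The same chain of estimates then yields $|T_{2,a}|=o(1/(n\,\bar\psi_a(c)))$ uniformly in $c$. For $T_{3,a}$ I invoke the Isserlis–Wick split $T_{3,a}=\tfrac12\mathrm{tr}(\bm H\bm\Sigma\bm H\bm\Sigma)+\tfrac14\sum g_{ij}g_{k\ell}\kappa_{ijkl}=T_{3,I,a}+T_{3,II,a}$, exactly as in Lemma~\ref{lem:T3-formal}. The nonnegative pairing term $T_{3,I,a}$ is driven by the dominant mixed block $g_{d^{(a)}_{jL},Y^{(a)}_{iL}}=-\tfrac{w_j w_i^2}{2\,Y_i\,S^3}\,(1-2b_a(c))$ from \eqref{eq:hess-SSS-final}; contracting it against $\bm\Sigma$ (using $\Sigma_{d^{(a)}_{jL},d^{(a)}_{\ell L}}=\delta_{j\ell}\Var(d^{(a)}_{jL})$ with $\Var(d^{(a)}_{jL})=\psi_a(c)\asymp\bar\psi_a(c)$, so the first factor is $\Theta(n\,\bar\psi_a(c))$), together with the soft overlap scaling $\sum_{i,k}\Cov(Y^{(a)}_{iL},Y^{(a)}_{kL})\asymp n^3\bar\psi_a(c)$ of Lemma~\ref{lem:soft-overlap-final} and the bulk–regularity sub–sum argument (a fixed positive fraction of pairs $(i,k)$ have $Y_i,Y_k\asymp n$, capturing a fixed fraction of the overlap mass), yields $T_{3,I,a}=\frac{(1-2b_a(c))^2}{n\,\bar\psi_a(c)}\,\Lambda_a(c)+o(1/(n\,\bar\psi_a(c)))$ with $\Lambda_a$ bounded and bounded away from zero. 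The cumulant term obeys $|T_{3,II,a}|\le C\|\bm H\|_F^2=O\!\big((1-2b_a(c))^2/S^6\big)=o(1/(n\,\bar\psi_a(c)))$ by the standard fourth–cumulant bound for sums of bounded independent arrays. Setting $\kappa_a(c):=\Lambda_a(c)(1-2b_a(c))^2$ and adding the three pieces gives the claimed expansion; the edge bound $\kappa_a(c)\ge\kappa_0>0$ on $(0,\varepsilon]\cup[1-\varepsilon,1)$ follows because Lemma~\ref{lem:psi-a-bounds-final} forces $|1-2b_a(c)|\ge 1-2\varepsilon-O(1/a)$ there, which is bounded away from zero for all large $a$.

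\emph{Main obstacle.} The hard part is uniformity in $c$ all the way to the boundary, where $\bar\psi_a(c)$ degrades to order $a^{-1}$ rather than staying $\Theta(1)$: each $o(\cdot)$ remainder must be shown to be genuinely of smaller order than $1/(n\,\bar\psi_a(c))$ \emph{uniformly}, which forces one to track the implied constants in Lemmas~\ref{lem:soft-scale-final}–\ref{lem:soft-overlap-final} and \ref{lem:psi-a-bounds-final} rather than use them as black boxes. Within $T_{3,I,a}$ this amounts to showing, as in the delicate step of Lemma~\ref{lem:T3-formal}, that restricting the double overlap sum $\sum_{i,k}U_iU_k\,\Cov(Y^{(a)}_{iL},Y^{(a)}_{kL})$ to the bulk block still retains a fixed fraction of the soft overlap mass $n^3\bar\psi_a(c)$, uniformly in $c$; once that is in hand, the identification of $\Lambda_a(c)$ along a convergent subsequence and the extraction of $\kappa_a(c)$ are routine. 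A secondary bookkeeping point is to keep the dependence on $a$ explicit throughout, so that the expansion remains valid in the regime $a=a_n\to\infty$ jointly with $n$ needed for the ECP conclusions in the sequel.
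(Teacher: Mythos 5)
Your proposal follows essentially the same route as the paper's own proof: the identical $T_{1,a}+T_{2,a}+T_{3,a}$ delta/Edgeworth decomposition, the law-of-total-variance computation of $T_{1,a}$ via Lemma~\ref{lem:soft-d-mean-var-final} and the identity $\E[\Delta_{k,a}(c)]=b_a(c)\{1-b_a(c)\}-\psi_a(c)$, the Cauchy--Schwarz bound for $T_{2,a}$, and the Isserlis--Wick split of $T_{3,a}$ with the mixed Hessian block contracted against the soft overlap scaling of Lemma~\ref{lem:soft-overlap-final}. The argument is correct at the same level of rigor as the paper, and your closing remarks on boundary uniformity and tracking the $a$-dependence only make explicit cautions the paper leaves implicit.
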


\begin{proof}
% \emph{First–order term \(T_{1,a}\).}
From \eqref{eq:grad-SSS-final},
\[
T_{1,a}=\nabla g(\bm\mu)^\top\Sigma\nabla g(\bm\mu)
=\frac{\Var\!\big(N_a\big)}{S(\bm\mu)^2}.
\]
Let \(\mathcal A_k\) be the risk–set \(\sigma\)-field at \(t_k\). By Lemma~\ref{lem:soft-d-mean-var-final},
\[
\Var\!\big(d^{(a)}_{kL}\mid \mathcal A_k\big)
=b^{(a)}_k(1-b^{(a)}_k)-\Delta_{k,a}(c),
\qquad
\Delta_{k,a}(c)=\frac{1}{Y_k}\sum_{j\in\mathcal R_k} s_a(Z_j;c)\{1-s_a(Z_j;c)\}.
\]
Conditional independence across distinct event times gives
\(
\Var(N_a)=\sum_k w_k^2\,\E\big[\Var(d^{(a)}_{kL}\mid\mathcal A_k)\big]
=\sum_k w_k^2\,\E\big[b^{(a)}_k(1-b^{(a)}_k)\big]-\sum_k w_k^2\,\E[\Delta_{k,a}(c)].
\)
At the expansion point,
\(
S(\bm\mu)^2=\sum_k w_k^2\,b_a(c)\{1-b_a(c)\}
\)
because \(\E[b^{(a)}_k]=b_a(c)\).
Therefore
\[
T_{1,a}
=1-\frac{\sum_k w_k^2\,\E[\Delta_{k,a}(c)]}{\sum_k w_k^2\,b_a(c)\{1-b_a(c)\}}
+\frac{\sum_k w_k^2\,\big(\E[b^{(a)}_k(1-b^{(a)}_k)]-b_a(c)\{1-b_a(c)\}\big)}{\sum_k w_k^2\,b_a(c)\{1-b_a(c)\}}.
\]
Since
\(
\E[b^{(a)}_k(1-b^{(a)}_k)]-b_a(1-b_a)=-\Var(b^{(a)}_k)=-\psi_a(c)/Y_k
\)
by independence across subjects, the last fraction equals
\(
-\,\psi_a(c)\,\big(\sum_k w_k^2/Y_k\big)\big/\big(\sum_k w_k^2\,b_a(1-b_a)\big)
=O(\log n/n).
\)
Moreover,
\[
\E[\Delta_{k,a}(c)]
=\E\big[s_a(Z;c)-s_a(Z;c)^2\big]
=b_a(c)\{1-b_a(c)\}-\psi_a(c),
\]
so
\[
\frac{\sum_k w_k^2\,\E[\Delta_{k,a}(c)]}{\sum_k w_k^2\,b_a(1-b_a)}
=1-\frac{\psi_a(c)}{b_a(c)\{1-b_a(c)\}}
=O\!\Big(\frac{1}{a}\Big),
\]
using \(\psi_a(c)=b_a(c)\{1-b_a(c)\}-\tfrac{1}{a}\big(\sigma(ac)-\sigma(a(c-1))\big)\).
Altogether,
\(
T_{1,a}=1-\tau_a
\)
with \(\tau_a=O(1/a)+O(\log n/n)\), uniformly in \(c\).

% \emph{Third–moment term \(T_{2,a}\).}
From \eqref{eq:grad-SSS-final}–\eqref{eq:hess-SSS-final} and Cauchy–Schwarz,
\[
|T_{2,a}|
\ \le\ \Big(\sum_{j,k} g_{jk}^2\Big)^{1/2}
\Big(\sum_{j,k}\Big(\sum_i g_i\,m_{ijk}\Big)^2\Big)^{1/2}.
\]
The mixed block \(\partial^2 g/\partial d^{(a)}\partial Y^{(a)}\) dominates with size \(O(S^{-3}/n)\), so \(\sum_{j,k} g_{jk}^2=O(S^{-6})\). Decompose \(\bm v-\bm\mu=\sum_{u=1}^n \xi_u\) into independent subject contributions (\(\xi_u\) affects \(O(1)\) coordinates in expectation); boundedness of \(s_a\) implies \(\sum_{i,j,k} m_{ijk}^2=O(n)\). Since \(\sum_i g_i^2=\Theta(1/\bar\psi_a(c))\) and \(S(\bm\mu)^2\asymp n\,\bar\psi_a(c)\) by Lemma~\ref{lem:soft-scale-final},
\[
|T_{2,a}|
=O\!\big(S^{-3}\sqrt n\,\big)
=O\!\Big(\frac{1}{n\,\bar\psi_a(c)^{3/2}}\Big)
=o\!\Big(\frac{1}{n\,\bar\psi_a(c)}\Big),
\]
uniformly in \(c\).

% \emph{Fourth–moment term \(T_{3,a}\).}
By the Isserlis/Wick decomposition \citep{Isserlis:1918, Wick:1950},
\[
T_{3,a}
=\tfrac12\,\mathrm{tr}(A\Sigma A\Sigma)\;+\;\tfrac14\,g_{ij}g_{k\ell}\,\kappa_{ijkl},
\qquad A=(g_{ij}).
\]
The cumulant contraction is \(o(1/(n\,\bar\psi_a(c)))\) for bounded indicators. The pairing piece equals \(\tfrac12\|\Sigma^{1/2}A\Sigma^{1/2}\|_F^2\ge 0\) and is dominated by the mixed block \eqref{eq:hess-SSS-final}. Contracting this block against the soft-overlap covariance (Lemma~\ref{lem:soft-overlap-final}) and using \(S(\bm\mu)^6\asymp\{n\,\bar\psi_a(c)\}^3\) gives
\[
T_{3,a}(c)\ =\ \frac{\kappa_a(c)}{n\,\bar\psi_a(c)}\ +\ o\!\Big(\frac{1}{n\,\bar\psi_a(c)}\Big),
\qquad
\kappa_a(c)=\Lambda_a(c)\,\big(1-2\,b_a(c)\big)^2,
\]
with \(\Lambda_a(c)\) bounded and bounded away from zero uniformly in \(c\). Since \(b_a(c)=c+O(a^{-1})\) uniformly, \((1-2\,b_a(c))^2\) is strictly positive on any fixed end–neighborhood.
Summing \(T_{1,a}+T_{2,a}+T_{3,a}\) yields the stated expansion.
\end{proof}

Lemma~\ref{lem:SSS-expansion-rev} yields the same three–term structure as in the hard case, with the key substitution \(c(1-c)\mapsto \bar\psi_a(c)\) and a strictly positive third–order factor \(\kappa_a(c)=\Lambda_a(c)\{1-2\,b_a(c)\}^2\) away from $c=1/2$. Unlike the hard split, the first–order correction now depends on \(c\):
\[
\tau_a(c)
=\frac{\frac{1}{a}\{\sigma(ac)-\sigma(a(c-1))\}}{b_a(c)\{1-b_a(c)\}}
+O\!\Big(\frac{\log n}{n}\Big).
\]
Two facts are decisive:
(i) on any fixed interior \([\varepsilon,1-\varepsilon]\), \(b_a(c)\{1-b_a(c)\}\asymp 1\), so \(\tau_a(c)=O(1/a)+O(\log n/n)=o(1)\);
(ii) on the edge layer \(c\in(0,1/a]\cup[1-1/a,1)\), \(b_a(c)\{1-b_a(c)\}\asymp 1/a\), hence \(\tau_a(c)=\Theta(1)+O(\log n/n)\), and since it enters as \(1-\tau_a(c)\), it \emph{penalizes} the boundary.
Meanwhile,
\[
T_{3,a}(c)=\frac{\kappa_a(c)}{n\,\bar\psi_a(c)}+o\!\Big(\frac{1}{n\,\bar\psi_a(c)}\Big),
\qquad
\bar\psi_a(c)\ \gtrsim\
\begin{cases}
1, & c\in[\varepsilon,1-\varepsilon],\\
1/a, & c\in(0,1/a]\cup[1-1/a,1),
\end{cases}
\]
so the third–order inflation is \(O(1/n)\) in the interior and at most \(O(a/n)\) near the edges.
In combination, the \emph{capped} edge scale \(1/\{n\,\bar\psi_a(c)\}\) together with the boundary penalty \(\tau_a(c)\) neutralizes the variance spike that drives end–cut preference; with \(a\) fixed (or \(a=o(n)\)), the boundary ceases to dominate the interior.

\subsection{ECP under SSS}
\label{sec-ECP-SSS}

We now quantify how smoothing curbs the boundary-driven variance inflation. We first establish uniform lower bounds for the softened variance scale, then combine these with the Edgeworth expansion to obtain global bounds and an `avoidance' criterion.

\begin{lemma} %[Lower bounds on the softened scale]
\label{lem:psi-lower-rev}
There exist constants \(C_1,C_2>0\) such that, uniformly in \(c\in(0,1)\) and \(a\ge 1\),
\[
\bar\psi_a(c)\ \ge\ c(1-c)-\frac{C_1}{a},
\qquad
\inf_{\,c\in(0,1/a]\cup[1-1/a,\,1)}\bar\psi_a(c)\ \ge\ \frac{C_2}{a}.
\]
\end{lemma}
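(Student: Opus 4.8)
The plan is to establish both inequalities for the leading quantity $b_a(c)\{1-b_a(c)\}$, where $b_a(c)=\E[s_a(Z;c)]$, and then transfer them to $\bar\psi_a(c)$. By Lemma~\ref{lem:soft-scale-final}, $\bar\psi_a(c)$ agrees with $b_a(c)\{1-b_a(c)\}$ up to the correction exhibited there, which is of smaller order than $1/a$ under the standing conditions; so it suffices to show $b_a(c)\{1-b_a(c)\}\ge c(1-c)-C_1/a$ uniformly and $b_a(c)\{1-b_a(c)\}\ge C_2/a$ on the edge layer $(0,1/a]\cup[1-1/a,1)$, after which the claimed bounds follow by shrinking the constants to absorb the correction.

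For the first inequality I would simply expand
\[
b_a(1-b_a)-c(1-c)=(b_a-c)(1-2c)-(b_a-c)^2,
\]
and bound the right side using $|b_a(c)-c|\le 2\log 2/a$ from Lemma~\ref{lem:psi-a-bounds-final}, $|1-2c|\le1$, and $a\ge1$; this gives $|b_a(1-b_a)-c(1-c)|\le C_1/a$ with $C_1=2\log 2+(2\log 2)^2$, and in particular the stated lower bound.

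The edge inequality is the substantive step, and I would split on the magnitude of $a$. Set $a_0:=2\log(1+e)$, chosen so that $\log(1+e)/a_0=1/2$. In the case $a\ge a_0$ with $c\in(0,1/a]$ (the right edge being symmetric): the closed form $b_a(c)=\{L(ac)-L(a(c-1))\}/a$ with $L\ge0$ and $ac\le1$ yields $b_a(c)\le L(1)/a=\log(1+e)/a\le1/2$, hence $1-b_a(c)\ge1/2$; and the subinterval estimate from the proof of Lemma~\ref{lem:psi-a-bounds-final} — on $[c,c+1/a]\subset[0,1]$ one has $s_a(z;c)\ge\sigma(-1)$ — gives $b_a(c)\ge\frac{1}{(1+e)a}$; therefore $b_a(1-b_a)\ge\frac{1}{2(1+e)a}$ on the edge layer. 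In the case $1\le a<a_0$: since $a|c-z|\le a_0$ for all $c,z\in[0,1]$, we have $s_a(z;c)\in[\sigma(-a_0),\sigma(a_0)]$ pointwise, so $b_a(c)\in[\sigma(-a_0),\sigma(a_0)]$ and $b_a(1-b_a)\ge\sigma(-a_0)\{1-\sigma(a_0)\}=\sigma(-a_0)^2$ on all of $(0,1)$. Taking $C_2':=\min\{\tfrac{1}{2(1+e)},\sigma(-a_0)^2\}$, both cases give $b_a(1-b_a)\ge C_2'/a$ (using $a\ge1$ in the second), and the claim follows with $C_2:=C_2'/2$ after absorbing the correction from Lemma~\ref{lem:soft-scale-final}.

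The main obstacle is this edge bound, and specifically the temptation to route it through $\psi_a=\Var(s_a(Z;c))$: Lemma~\ref{lem:psi-a-bounds-final} does give $\psi_a(c)\ge C/a$ near the edge, but there $\psi_a$ is itself only of exact order $1/a$ and differs from $b_a(1-b_a)$ by $\tfrac1a\{\sigma(ac)-\sigma(a(c-1))\}$, a term of the \emph{same} order, so any transfer $\psi_a\mapsto\bar\psi_a$ through that route loses a factor comparable to the quantity being bounded. Anchoring everything on $b_a(1-b_a)$, whose order near the edge is pinned down directly by the elementary two-sided bounds on $b_a(c)$, circumvents this; the only remaining care is uniformity in $a$, which is why the argument is split into the $a\ge a_0$ and $a<a_0$ regimes so that a single pair $(C_1,C_2)$ works for all $a\ge1$.
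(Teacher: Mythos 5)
Your proof is correct and follows essentially the same route as the paper's: both reduce the claim to bounds on $b_a(c)\{1-b_a(c)\}$ via Lemma~\ref{lem:soft-scale-final}, prove the interior inequality by expanding $b_a(1-b_a)-c(1-c)$ with $|b_a(c)-c|\le 2\log 2/a$, and obtain the edge bound from the subinterval estimate $b_a(c)\ge \{(1+e)a\}^{-1}$. Your explicit case split on $a\gtrless a_0$ and the upper bound $b_a(c)\le\log(1+e)/a$ are a slightly more careful packaging of the paper's one-line step ``$b_a(1-b_a)\ge (c_0/a)(1-c_0/a)\ge c_0/(2a)$,'' but the substance is identical (and both arguments share the same mild looseness in absorbing the $O(a^{-1})$ remainder of $\bar\psi_a$ into the $\Theta(a^{-1})$ edge lower bound).
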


\begin{proof}
Recall from Lemma~\ref{lem:soft-scale-final} that
\[
\bar\psi_a(c)\ =\ b_a(c)\big(1-b_a(c)\big)\ +\ O\!\Big(\frac{1}{a}\Big)
\qquad\text{uniformly in }c\in(0,1),
\]
where \(b_a(c):=\E[s_a(Z;c)]\). By Lemma~\ref{lem:psi-a-bounds-final}, \(|\,b_a(c)-c\,|\le 2\log 2/a\), hence
\[
b_a(c)\{1-b_a(c)\}
\ \ge\ c(1-c) - \Big|\,b_a(c)-c\,\Big| - \Big|\,b_a(c)-c\,\Big|^2
\ \ge\ c(1-c) - \frac{C'_1}{a},
\]
which implies the first inequality after absorbing constants into \(C_1\).

For the edge layer, take \(c\in(0,1/a]\); by Lemma~\ref{lem:psi-a-bounds-final} there exists \(c_0>0\) such that \(b_a(c)\ge c_0/a\) uniformly in \(a\ge 1\). Hence
\[
b_a(c)\{1-b_a(c)\}\ \ge\ \frac{c_0}{a}\Big(1-\frac{c_0}{a}\Big)\ \ge\ \frac{c_0}{2a},
\]
for all \(a\ge 2c_0\). The same bound holds on \(c\in[1-1/a,1)\) by symmetry. Combining with the \(O(1/a)\) remainder gives \(\bar\psi_a(c)\ge C_2/a\).
\end{proof}

Lemma~\ref{lem:psi-lower-rev} supplies uniform lower bounds for the softened scale \(\bar\psi_a(c)\): it tracks \(c(1-c)\) in the interior up to \(O(a^{-1})\), and stays of order \(1/a\) near the boundaries. These bounds are the inputs that cap the edge inflation in the variance expansion.

\begin{theorem}% [SSS mitigates end–cut preference]
\label{thm:SSS-mitigates-rev}
Uniformly in \(c\in(0,1)\),
\[
\Var\!\big(q_a(c)\big)\ \le\ 1-\tau_a(c)\ +\ \frac{C\,a}{n}\ +\ o\!\Big(\frac{a}{n}\Big),
\]
for some finite \(C>0\) independent of \(n,a\). Consequently, the hard–split divergence \(1/\{n\,c(1-c)\}\) is replaced by the bounded scale \(O(a/n)\); in particular, if \(a=o(n)\), the edge inflation is \(o(1)\) and ECP is mitigated.
\end{theorem}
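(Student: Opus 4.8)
The plan is to read off the conclusion directly from the second–order expansion in Lemma~\ref{lem:SSS-expansion-rev}, after supplying one missing ingredient: a lower bound on \(\bar\psi_a(c)\) that is valid \emph{uniformly} over all of \((0,1)\), not merely on the edge layer. Recall that Lemma~\ref{lem:SSS-expansion-rev} gives, uniformly in \(c\),
\[
\Var\!\big(q_a(c)\big)=1-\tau_a(c)+\frac{\kappa_a(c)}{n\,\bar\psi_a(c)}+o\!\Big(\frac{1}{n\,\bar\psi_a(c)}\Big),
\]
with \(\kappa_a(c)=\Lambda_a(c)\{1-2b_a(c)\}^2\), \(\Lambda_a\) bounded, and \(b_a(c)\in(0,1)\); hence \(\kappa_a(c)\le M\) for an absolute constant \(M\). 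So the only quantity that can inflate the third–order term is \(\bar\psi_a(c)\) being small, and the whole theorem reduces to a uniform lower bound \(\bar\psi_a(c)\gtrsim 1/a\).

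First I would establish \(\bar\psi_a(c)\ge c_\ast/a\) for all \(c\in(0,1)\) and all \(a\ge 2\). Lemma~\ref{lem:psi-lower-rev} already gives this on the edge layer \((0,1/a]\cup[1-1/a,1)\); to cover the full range I would use that \(b_a(c)=\E[s_a(Z;c)]\) is increasing in \(c\) (since \(z\mapsto s_a(z;c)\) is increasing in \(c\)) and symmetric about \(1/2\) with \(b_a(1/2)=1/2\) (because \(s_a(z;\tfrac12)+s_a(1-z;\tfrac12)=1\) and \(Z\stackrel{d}{=}1-Z\)). For \(c\le 1/2\) this gives \(1-b_a(c)\ge\tfrac12\), while \(b_a(c)\ge\int_c^{c+1/a}\sigma\!\big(a(c-z)\big)\,dz=\tfrac1a\int_{-1}^{0}\sigma(v)\,dv=:c_0/a\) (valid since \(c\le 1-1/a\)); hence \(b_a(c)\{1-b_a(c)\}\ge c_0/(2a)\), and the symmetric argument handles \(c\ge 1/2\). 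Finally, \(\bar\psi_a(c)=b_a(c)\{1-b_a(c)\}+O(a^{-1})\) by Lemma~\ref{lem:soft-scale-final}, and that correction — coming from \(-\psi_a(c)\sum_k w_k^2/Y_k\) with \(0\le\psi_a(c)\le b_a(c)\{1-b_a(c)\}\) and \(\sum_k w_k^2/Y_k=O(\log n)\) — is of relative order \(\log n/n=o(1)\); so \(\bar\psi_a(c)\ge c_\ast/a\) uniformly for \(n\) large.

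With this in hand the theorem is bookkeeping: \(\dfrac{\kappa_a(c)}{n\,\bar\psi_a(c)}\le\dfrac{M}{n\,(c_\ast/a)}=\dfrac{Ca}{n}\) with \(C=M/c_\ast\), and the remainder \(o(1/(n\bar\psi_a(c)))\), being uniform in \(c\) and bounded above by \(a/(c_\ast n)\), is \(o(a/n)\); substituting into the displayed expansion yields \(\Var(q_a(c))\le 1-\tau_a(c)+Ca/n+o(a/n)\) uniformly in \(c\). For the ``consequently'' clause I would contrast this with the proof of Theorem~\ref{thm:main}: there the interior-vs-boundary comparison hinged on the term \(1/\{n\,c(1-c)\}\), which diverges as \(c\to 0,1\) and produced a strict variance gap \(\delta>0\) favoring the end cuts; here that term has been replaced by the uniformly bounded \(O(a/n)\), so if \(a=o(n)\) the inflation is \(o(1)\) and no such gap survives. (One may add that \(\tau_a(c)=\Theta(1)\) on the edge layer, so the boundary is in fact \emph{penalized} — this is what pushes ``mitigated'' toward ``avoided'' for moderate \(a\) — but it is not needed for the stated bound.)

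The step I expect to be the main obstacle is precisely the uniform lower bound \(\bar\psi_a(c)\gtrsim 1/a\): the interior estimate \(\bar\psi_a(c)\asymp c(1-c)\) and the edge estimate \(\bar\psi_a(c)\gtrsim 1/a\) are each inconclusive on their own in the \(O(1/a)\)–wide transition zone \(c\asymp 1/a\) (the universal constant \(C_1\) in \(\bar\psi_a(c)\ge c(1-c)-C_1/a\) need not be below \(1\)), so one genuinely needs the monotonicity-plus-symmetry argument above — or, equivalently, a modest enlargement of the edge layer on which the direct sigmoid-integral bound is re-run. Everything else is substitution into Lemma~\ref{lem:SSS-expansion-rev} and collecting orders.
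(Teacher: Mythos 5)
Your proposal follows essentially the same route as the paper's proof: substitute the expansion of Lemma~\ref{lem:SSS-expansion-rev}, bound \(\kappa_a(c)\) by an absolute constant, and control \(\kappa_a(c)/\{n\,\bar\psi_a(c)\}\) by \(Ca/n\) via a uniform lower bound \(\bar\psi_a(c)\gtrsim 1/a\), with the remainder absorbed at the same order. The one substantive difference is that you correctly notice that Lemma~\ref{lem:psi-lower-rev} as stated does not by itself deliver \(\inf_{c\in(0,1)}\bar\psi_a(c)\ge \mathrm{const}/a\): the interior bound \(c(1-c)-C_1/a\) can be vacuous in the transition zone \(c\asymp 1/a\), and the edge bound is asserted only on \((0,1/a]\cup[1-1/a,1)\), whereas the paper's proof simply writes \(\sup_c \kappa_a(c)/\{n\bar\psi_a(c)\}\le C/\{n\inf_c\bar\psi_a(c)\}\le Ca/n\) without addressing this. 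Your repair — monotonicity of \(b_a\) in \(c\) plus the symmetry \(b_a(1/2)=1/2\), giving \(1-b_a(c)\ge 1/2\) for \(c\le 1/2\), combined with the sigmoid-integral bound \(b_a(c)\ge c_0/a\), hence \(b_a(c)\{1-b_a(c)\}\ge c_0/(2a)\) everywhere — closes that gap cleanly, and your observation that the \(O(a^{-1})\) correction to \(\bar\psi_a\) is in fact of relative order \(\log n/n\) (from \(-\psi_a(c)\sum_k w_k^2/Y_k\) in Lemma~\ref{lem:soft-scale-final}) justifies that it cannot overwhelm this lower bound. Everything else, including the \(a=o(n)\) consequence, matches the paper; your version is the more careful of the two.
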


\begin{proof}
From Lemma~\ref{lem:SSS-expansion-rev},
\[
\Var(q_a(c))\ =\ 1-\tau_a(c)\ +\ \frac{\kappa_a(c)}{n\,\bar\psi_a(c)}\ +\ o\!\Big(\frac{1}{n\,\bar\psi_a(c)}\Big),
\]
with \(\kappa_a(c)=O(1)\) uniformly. Taking the supremum over \(c\) and using Lemma~\ref{lem:psi-lower-rev},
\[
\sup_{c}\frac{\kappa_a(c)}{n\,\bar\psi_a(c)}\ \le\ \frac{C}{n\,\inf_c \bar\psi_a(c)}\ \le\ \frac{C\,a}{n}.
\]
The remainder term is of the same order. Since \(\tau_a(c)\ge 0\) (by construction as a variance reduction), dropping \(-\tau_a(c)\) yields an upper bound, proving the claim.
\end{proof}

Theorem~\ref{thm:SSS-mitigates-rev} shows that smoothing replaces the hard-edge blowup \(1/\{n\,c(1-c)\}\) by a tunable bound \(O(a/n)\). Thus, for \(a=o(n)\), boundary inflation is uniformly negligible. Furthermore, a sufficient condition under which ECP can be avoided by SSS is provided below.

\begin{corollary} %[Sufficient conditions to \emph{avoid} ECP]
\label{cor:SSS-avoid-rev}
Fix \(\varepsilon\in(0,1/2)\). There exists \(A_\varepsilon<\infty\) such that if \(a\le A_\varepsilon\) for all \(n\), then for all sufficiently large \(n\),
\[
\sup_{\,c\in(0,\varepsilon]\cup[1-\varepsilon,1)} \Var\!\big(q_a(c)\big)
\ <\
\sup_{\,c\in[\varepsilon,\,1-\varepsilon]} \Var\!\big(q_a(c)\big).
\]
Hence \(\Pr\!\big(\arg\max_{c\in\mathcal C_n} q_a(c)^2\in[\varepsilon,\,1-\varepsilon]\big)\to 1\).
\end{corollary}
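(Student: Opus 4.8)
\textbf{Proof plan for Corollary~\ref{cor:SSS-avoid-rev}.}
The plan is to contrast the Edgeworth expansion of $\Var(q_a(c))$ on the edge layer against its value in the interior, showing that for bounded $a$ the first--order penalty $\tau_a(c)$ on the boundary overwhelms any third--order inflation, so the supremum of the variance is attained in $[\varepsilon,1-\varepsilon]$; Gaussian localization then transfers this to the argmax statement. First I would fix $\varepsilon\in(0,1/2)$ and invoke Lemma~\ref{lem:SSS-expansion-rev} to write, uniformly in $c$,
\[
\Var(q_a(c))=1-\tau_a(c)+\frac{\kappa_a(c)}{n\,\bar\psi_a(c)}+o\!\Big(\frac{1}{n\,\bar\psi_a(c)}\Big),
\]
with $\kappa_a(c)=O(1)$ and $\tau_a(c)\ge 0$. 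Since $a\le A_\varepsilon$ is bounded, Lemma~\ref{lem:psi-lower-rev} gives $\bar\psi_a(c)\gtrsim 1/a\gtrsim 1/A_\varepsilon$ everywhere, so the third--order term and its remainder are $O(A_\varepsilon/n)=o(1)$ uniformly in $c$. Hence on the \emph{interior} block $[\varepsilon,1-\varepsilon]$, where $b_a(c)\{1-b_a(c)\}\asymp 1$ and therefore (by the displayed formula for $\tau_a(c)$ in the discussion following Lemma~\ref{lem:SSS-expansion-rev}) $\tau_a(c)=O(1/a)+O(\log n/n)=o(1)$ once $a$ is at least a fixed constant, we get $\sup_{c\in[\varepsilon,1-\varepsilon]}\Var(q_a(c))\ge 1-o(1)$.

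Next I would show the boundary supremum is strictly smaller. On the edge layer $(0,\varepsilon]\cup[1-\varepsilon,1)$ I split further: for $c\in(0,1/a]\cup[1-1/a,1)$, Lemma~\ref{lem:psi-a-bounds-final} and the $\tau_a$ formula give $b_a(c)\{1-b_a(c)\}\asymp 1/a$, whence $\tau_a(c)\ge c_1>0$ for a constant $c_1$ depending only on the sigmoid (not on $n$); for $c\in(1/a,\varepsilon]$ (and symmetrically near $1$), $b_a(c)\{1-b_a(c)\}\le b_a(c)\le c+2\log2/a\le \varepsilon+2\log2/a$, so that $\tau_a(c)$ is bounded below by a positive constant $c_2(\varepsilon,a)$ as soon as $\varepsilon+2\log 2/a$ is bounded away from $1/4$ — which holds for $\varepsilon<1/2$ and $a$ large enough. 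Taking $A_\varepsilon$ large enough that both these constants are positive and setting $\delta:=\min\{c_1,c_2\}>0$, we obtain $\sup_{c\in(0,\varepsilon]\cup[1-\varepsilon,1)}\Var(q_a(c))\le 1-\delta+o(1)$. Comparing the two blocks yields a strict gap $\sup_{\text{interior}}\Var(q_a(c))-\sup_{\text{edge}}\Var(q_a(c))\ge \delta/2>0$ for all large $n$, which is the first display of the corollary.

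For the final probabilistic conclusion I would argue exactly as in the proof of Theorem~\ref{thm:main}: under the null the finite array $\{q_a(c):c\in\mathcal C_n\}$ is asymptotically mean--zero Gaussian with continuous correlation structure (the smoothing only regularizes the paths), the variance function is maximized in the interior by the strict gap just established, and standard localization for suprema of Gaussian processes \citep{Leadbetter:1983} places the maximizer of $|q_a(c)|$ — equivalently of $q_a(c)^2$ — in the interior region $[\varepsilon,1-\varepsilon]$ with probability tending to one. I expect the main obstacle to be the middle step: controlling $\tau_a(c)$ on the \emph{transition zone} $c\in(1/a,\varepsilon]$, where neither the ``deep edge'' estimate $b_a(c)\{1-b_a(c)\}\asymp 1/a$ nor the ``deep interior'' estimate $b_a(c)\{1-b_a(c)\}\asymp 1$ is available; one must use the explicit closed form $\tau_a(c)=\frac{1}{a}\{\sigma(ac)-\sigma(a(c-1))\}/[b_a(c)\{1-b_a(c)\}]+O(\log n/n)$ together with the monotonicity of $c\mapsto b_a(c)$ to verify that the ratio stays bounded away from zero throughout $(0,\varepsilon]$, and to pin down how large $A_\varepsilon$ must be taken as a function of $\varepsilon$ alone. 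The remaining steps are routine given the lemmas already established.
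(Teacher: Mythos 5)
Your overall strategy is the same as the paper's: invoke Lemma~\ref{lem:SSS-expansion-rev} for the three-term expansion, use Lemma~\ref{lem:psi-lower-rev} to cap the third-order term at $O(a/n)$ uniformly, argue that the first-order penalty $\tau_a(c)$ is order one on the boundary but small in the interior, deduce a strict variance gap, and finish with Gaussian localization as in Theorem~\ref{thm:main}. You also correctly identify the transition zone $c\in(1/a,\varepsilon]$ as the delicate region that the paper's own proof glosses over (the paper simply asserts $b_a(c)\{1-b_a(c)\}\asymp 1/a$ ``on the edge layer'' and applies it to all of $(0,\varepsilon]$, which implicitly forces $A_\varepsilon\lesssim 1/\varepsilon$).

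There are, however, two concrete problems in your middle step. First, for the interior block you claim $\tau_a(c)=O(1/a)+O(\log n/n)=o(1)$ ``once $a$ is at least a fixed constant,'' and hence $\sup_{[\varepsilon,1-\varepsilon]}\Var(q_a)\ge 1-o(1)$. For bounded $a\le A_\varepsilon$ (the regime of the corollary), $O(1/a)$ is a fixed positive constant, not $o(1)$ in $n$; the interior supremum is only $1-\Theta(1/a)$. Consequently your final gap $\delta-o(1)\ge\delta/2$ is not established: what must actually be shown is that the \emph{edge} penalty strictly exceeds the \emph{interior} penalty, i.e.\ $\inf_{\mathrm{edge}}\tau_a(c)>\tau_a(c_\star)$ at some admissible interior $c_\star$ (with $\kappa_a(c_\star)>0$, so $c_\star\ne 1/2$). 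The paper handles this by fixing a threshold ($\tau_a(c)\ge 1/4$ on the edge versus $\tau_a(c_\star)\le 1/4$), which amounts to comparing two order-one constants through the monotone dependence of $\tau_a$ on $b_a(c)\{1-b_a(c)\}$; your write-up needs this comparison, not a constant-versus-$o(1)$ argument. Second, your transition-zone bound ends with ``which holds for $\varepsilon<1/2$ and $a$ large enough,'' which is the wrong direction: the hypothesis is $a\le A_\varepsilon$, and the mechanism of the corollary is precisely that $a$ is capped so the smoothing bandwidth $1/a$ is comparable to $\varepsilon$. Since $\tau_a(c)\approx \tfrac{1}{a}\{\sigma(ac)-\sigma(a(c-1))\}/[b_a(c)\{1-b_a(c)\}]\asymp 1/(a\,c(1-c))$ for $c\gg 1/a$, letting $a$ grow drives the transition-zone penalty to zero and destroys the gap. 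Both defects are repairable with the explicit formula for $\tau_a(c)$ that you quote, but as written the key inequality of the corollary does not follow from your estimates.
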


\begin{proof}
For \(c\in(0,\varepsilon]\cup[1-\varepsilon,1)\) on the edge, Lemma~\ref{lem:SSS-expansion-rev} and Lemma~\ref{lem:psi-lower-rev} give
\[
\Var(q_a(c))\ \le\ 1-\tau_a(c)\ +\ C\,\frac{a}{n}\ +\ o\!\Big(\frac{a}{n}\Big),
\qquad
\tau_a(c)=\Theta(1)+O\!\Big(\frac{\log n}{n}\Big),
\]
because \(b_a(c)\{1-b_a(c)\}\asymp 1/a\) on the edge layer, so \(\tau_a(c)=\frac{\E[\Delta_{k,a}(c)]}{b_a(1-b_a)}+O(\log n/n)=\Theta(1)\).

On the interior \([\varepsilon,1-\varepsilon]\), \(\bar\psi_a(c)\ge \varepsilon(1-\varepsilon)-C_1/a\), and \(\kappa_a(c)\) is bounded below by a positive constant away from \(c=1/2\).
Thus there exists \(c_\star\in[\varepsilon,1-\varepsilon]\) such that
\[
\Var(q_a(c_\star))\ \ge\ 1-\tau_a(c_\star)\ +\ \frac{c_*}{n}\ +\ o\!\Big(\frac{1}{n}\Big),
\qquad
\tau_a(c_\star)=O\!\Big(\frac{1}{a}\Big)+O\!\Big(\frac{\log n}{n}\Big).
\]
Choose \(A_\varepsilon>0\) so that for all \(a\le A_\varepsilon\) and all large \(n\),
\(
C\,a/n \le (c_*/2)/n
\)
and
\(
\tau_a(c_\star)\le 1/4
\).
Then
\[
\sup_{\text{edge}}\Var(q_a)\ \le\ 1-\tfrac14+\tfrac{c_*}{2n}+o\!\Big(\tfrac{1}{n}\Big)
\ <\
1-\tau_a(c_\star)+\tfrac{c_*}{n}+o\!\Big(\tfrac{1}{n}\Big)
\ \le\ \sup_{\text{int}}\Var(q_a),
\]
for all sufficiently large \(n\), proving the claim.
\end{proof}

Corollary~\ref{cor:SSS-avoid-rev} prescribes a concrete regime, fixed (or slowly varying) \(a\), in which edge points are strictly suboptimal in variance, so the maximizer of \(q_a(c)^2\) lies in the interior with probability tending to one. Together with Theorem~\ref{thm:SSS-mitigates-rev}, this highlights two complementary smoothing effects that counter ECP: a \emph{capped edge scale} \(1/\{n\,\bar\psi_a(c)\}\) of order \(O(a/n)\), and a \emph{boundary penalty} through \(\tau_a(c)=\Theta(1)\) on the edge layer. In addition, the within–risk–set subtraction
\(
\Delta_{k,a}(c)=Y_k^{-1}\sum_{j\in\mathcal R_k} s_a(Z_j;c)\{1-s_a(Z_j;c)\}
\)
from Lemma~\ref{lem:soft-d-mean-var-final} uniformly lowers the per–time variance relative to the hard case. In combination, these mechanisms blunt the variance spike that otherwise favors end cuts and, for suitably chosen \(a\), eliminate the boundary advantage.

% \afterpage{
% \begin{landscape}
\begin{figure}[h]
\centering
\includegraphics[scale=0.6, angle=0]{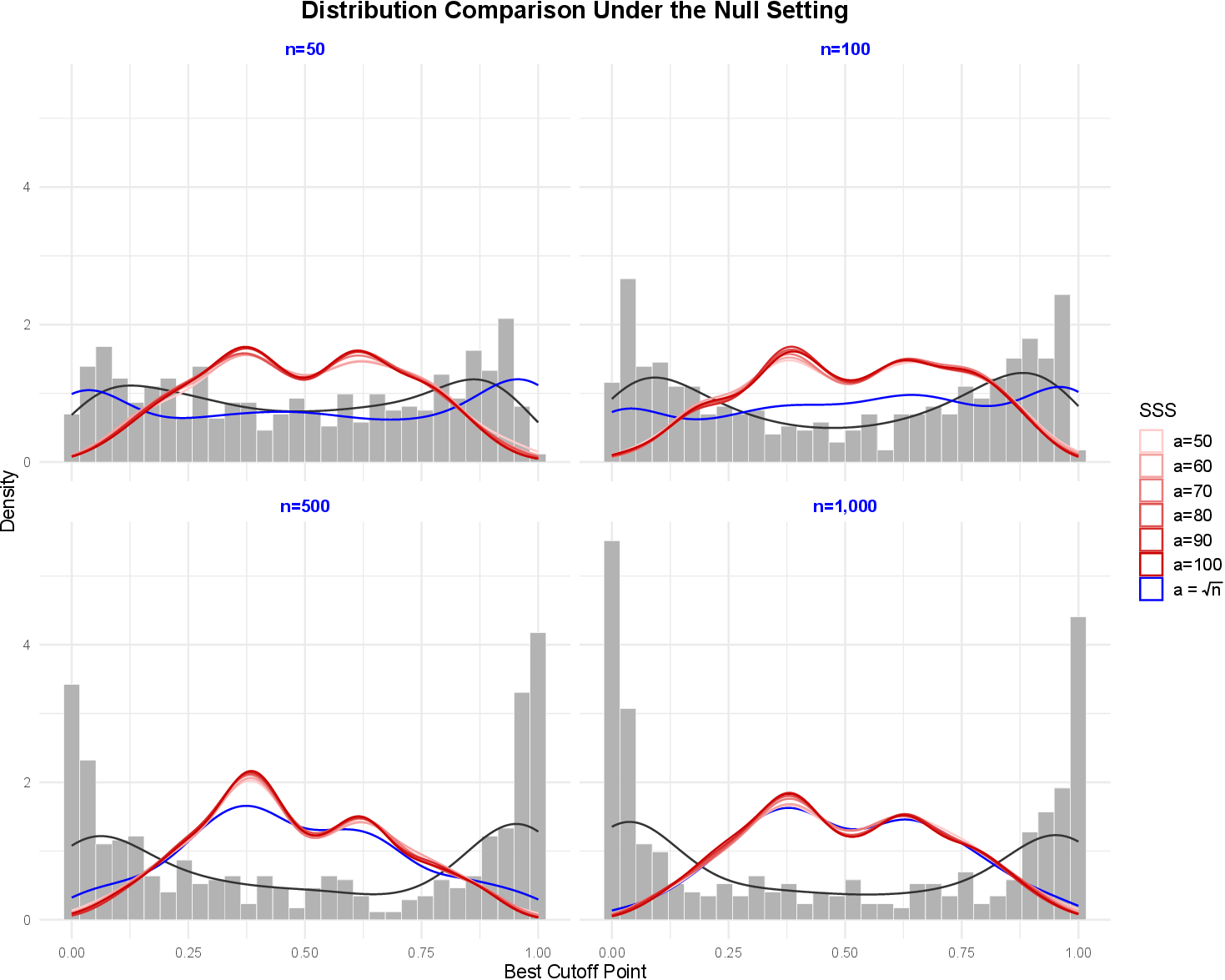} 
 \caption{Empirical distributions of estimated optimal cutoff points under the null setting: greedy search (GS) versus smooth sigmoid surrogate (SSS) methods. The GS method identifies the optimal cutoff by maximizing the log-rank test statistic. The SSS method employs a smooth approximation with scale parameter $a$ varying across ${\sqrt{n}, 50, 60, \ldots, 100}$. Each panel corresponds to a different sample size $n \in \{50, 100, 500, 1000\}$. Histograms represent the GS distribution, while colored density curves depict SSS results with different scale parameters.
\label{fig01-Null}}
\end{figure}
% \end{landscape}
% \clearpage }

\section{Numerical Illustration}
\label{sec-numerical}

For a numerical illustration of the theoretical results, we generate data from the following hazard model:
\begin{equation}
\label{model-split}
\lambda(t) = \exp \!\Big[\beta_0 + \beta_1 \, I(Z \leq c_0) \Big],
\end{equation}
where $\lambda(t)$ denotes the hazard function, the covariate $Z$ is uniformly distributed on $[0,1]$, and the true cutoff point is fixed at $c_0 = 0.5$. The baseline regression coefficient is set to $\beta_0 = 1$, while $\beta_1$ controls the signal strength. Both survival times and censoring times are generated from the same hazard function, which achieves a censoring rate of approximately 50\%.  

For each simulated dataset, both GS and SSS are applied to estimate the optimal cutoff point $\hat{c}$. To fully expose the end-cut preference (ECP), the minimum number of observations allowed in any child node is set to zero in GS, provided that the log-rank statistic is computable. Similarly, the search domain for SSS is taken as the entire interval $[0,1]$. A key tuning parameter of SSS is the shape parameter $a>0$. To examine its impact, we consider $a \in \{50, 60, \ldots, 100\}$, as well as the data-adaptive choice $a=\sqrt{n}$. To further investigate the influence of sample size $n$ on ECP, we consider four settings: $n \in \{50, 100, 500, 1000\}$. For each model configuration, 500 simulation replicates are conducted.

We first examine the null case with $\beta_1 = 0$, where $Z$ has no effect on the hazard. Figure~\ref{fig01-Null} displays the empirical distributions of the estimated cutoff points under GS and SSS, stratified by sample size. The results show that GS suffers from a pronounced ECP problem, which becomes increasingly severe as $n$ grows. This indicates that ECP is essentially an asymptotic phenomenon. With the adaptive choice $a=\sqrt{n}$, SSS substantially mitigates ECP, particularly for large samples; however, when $n=50$, SSS exhibits even stronger ECP than GS. In contrast, fixing $a$ at a relatively large value within $[50,100]$ allows SSS to successfully avoid ECP across all scenarios considered here.

% \afterpage{
% \begin{landscape}
\begin{figure}[h]
\centering
\includegraphics[scale=0.6, angle=0]{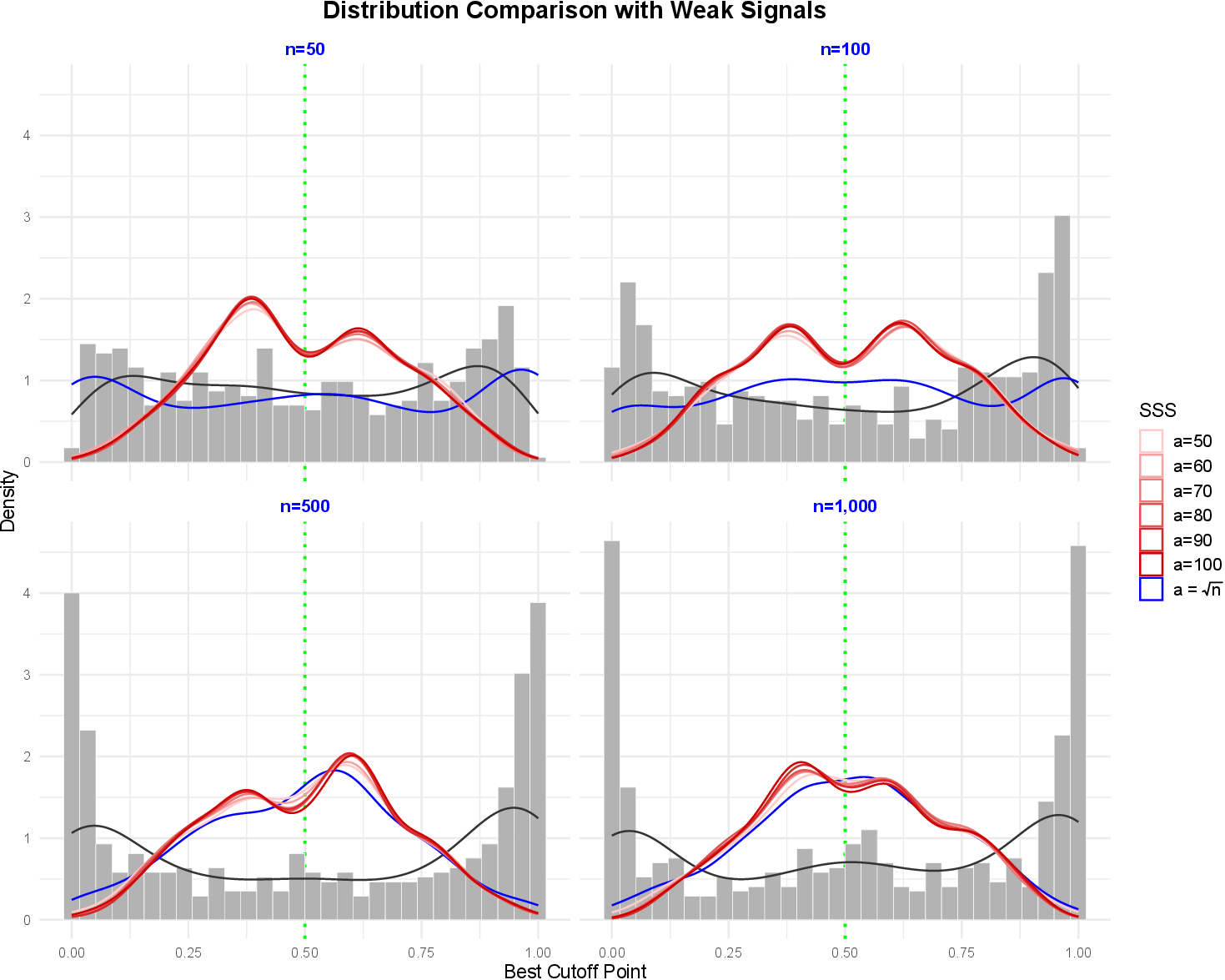} 
 \caption{Empirical distributions of estimated optimal cutoff points with weak signals: greedy search (GS) versus smooth sigmoid surrogate (SSS) methods. The GS method identifies the optimal cutoff by maximizing the log-rank test statistic. The SSS method employs a smooth approximation with scale parameter $a$ varying across ${\sqrt{n}, 50, 60, \ldots, 100}$. Each panel corresponds to a different sample size $n \in \{50, 100, 500, 1000\}$. The vertical dotted line at $c = 0.5$ indicates the true cutoff value. Histograms represent the GS distribution, while colored density curves depict SSS results with different scale parameters.
\label{fig02-Weak}}
\end{figure}
% \end{landscape}
% \clearpage }

We also consider a weak-signal setting with $\beta_1 = -0.1$. The resultant histograms and density estimates are shown in Figure~\ref{fig02-Weak}. The overall pattern is similar. GS continues to suffer from the ECP problem, which can mask weak signals even at large sample sizes. In comparison, SSS either mitigates or avoids ECP, depending on the choice of $a$. From these two studies, setting $a$ at a constant in $[50, 100]$ seems highly advisable.

\section{Discussion}\label{sec-discussion}

In survival trees, we have shown that greedy search (GS) based on the maximized logrank statistic is intrinsically prone to end-cut preference (ECP). The core mechanism is variance inflation at the boundaries: the standardized process satisfies
\[
\Var\{q(c)\}\ =\ 1-\tau\ +\ \frac{\kappa(c)}{n\,c(1-c)}\ +\ o\!\Big(\frac{1}{n\,c(1-c)}\Big),
\]
with \(\kappa(c)\) bounded and strictly positive near the ends. The factor \(1/\{n\,c(1-c)\}\) therefore diverges as \(c\to 0\) or \(1\), tilting the maximizer toward extreme splits. By contrast, the smooth sigmoid surrogate (SSS) replaces the hard indicator by \(s_a(\cdot;c)\), which (i) softens the variance scale from \(c(1-c)\) to \(\bar\psi_a(c)=b_a(c)\{1-b_a(c)\}+O(a^{-1})\) and (ii) introduces the within-risk-set subtraction
\[
\Delta_{k,a}(c)\ =\ \frac{1}{Y_k}\sum_{j\in\mathcal R_k} s_a(Z_j;c)\big\{1-s_a(Z_j;c)\big\},
\]
reducing the per-time conditional variance relative to the hard case. The resulting expansion
\[
\Var\{q_a(c)\}\ =\ 1-\tau_a(c)\ +\ \frac{\kappa_a(c)}{n\,\bar\psi_a(c)}\ +\ o\!\Big(\frac{1}{n\,\bar\psi_a(c)}\Big)
\]
is capped by \(O(a/n)\) uniformly because \(\bar\psi_a(c)\gtrsim 1/a\) near the edges. For \(a=o(n)\) this cap is \(o(1)\), and for fixed \(a\) the interior dominates, thereby mitigating and, under mild choices of \(a\), avoiding ECP.

An implementation of SSS within an entire survival-tree procedure has recently been developed \citep{Zhou:2025}. Beyond addressing ECP, their study evaluates computational aspects (e.g., computing time, numerical stability) and empirical accuracy in recovering population-optimal cutpoints, benchmarking against GS. The reported results indicate substantial improvements in both stability and runtime, as well as more reliable identification of interior cutpoints when the truth is not at the extremes.

Methodologically, SSS brings several advantages to recursive partitioning. First, it converts a discrete, non-smooth split search into a smooth, differentiable optimization in \(c\), enabling gradient-based solvers and alleviating sensitivity to sampling noise. Second, the shape or bandwidth parameter \(a\) provides direct control over the variance scale at the boundaries, replacing the diverging \(1/\{n\,c(1-c)\}\) with a tunable \(O(a/n)\). Third, the within-risk-set subtraction \(\Delta_{k,a}(c)\) uniformly lowers per-time variance, further suppressing artificial preference for end cuts. These features jointly stabilize the split selection, reduce the chance of spurious extreme splits, and improve computational efficiency, making SSS a compelling approach for tree-based modeling.

Finally, our analysis offers a general template for studying ECP beyond the setting with the logrank statistic. Many tree-structured methods rely on two-sample splitting statistics that are asymptotically \(\chi^2(1)\) for a fixed cutpoint. The approach here, (i) recasting the standardized statistic as a mean-zero Gaussian process indexed by the cutpoint, (ii) using extreme-value heuristics to identify the role of the variance function, and (iii) deriving a second-order (Edgeworth) variance expansion to expose boundary terms, extends naturally to other outcomes (classification, regression), data types (longitudinal, time series, functional), and alternative weightings. In these scenarios, a smooth surrogate can again regularize the variance near the edges, capping boundary inflation and thereby mitigating ECP within a unified theoretical framework.

% \newpage
% \hrulefill

%\bibliographystyle{plainnat}

\end{document}